\newtheorem{theorem}{Theorem}
\newtheorem{definition}{Definition}
\begin{document}

\title{BSNet: Bi-Similarity Network for Few-shot Fine-grained Image Classification}

\author{\IEEEauthorblockN{Xiaoxu Li, Jijie Wu, Zhuo Sun, Zhanyu Ma,~\emph{Senior Member, IEEE,} Jie Cao, Jing-Hao Xue,~\emph{Member, IEEE}} \thanks{
X. Li, J. Wu and J. Cao are with the School of Computer and Communication, Lanzhou University of Technology, Lanzhou 730050, China.

Z. Ma is with the Pattern Recognition and Intelligent System Laboratory, School of Artificial Intelligence, Beijing University of Posts and Telecommunications, Beijing 100876, China.

Z. Sun and J.-H. Xue are with the Department of Statistical Science, University College London, London, WC1E 6BT, U.K.

X. Li, J. Wu and Z. Sun contribute equally.
}}

\maketitle

\thispagestyle{empty}

\begin{abstract}
Few-shot learning for fine-grained image classification has gained recent attention in computer vision. Among the approaches for few-shot learning, due to the simplicity and effectiveness, metric-based  methods are favorably state-of-the-art on many tasks. Most of the metric-based  methods assume a single similarity measure and thus obtain a single feature space. However, if samples can simultaneously be well classified via two distinct similarity measures, the samples within a class can distribute more compactly in a smaller feature space, producing more discriminative feature maps. Motivated by this, we propose a so-called \textit{Bi-Similarity Network} (\textit{BSNet}) that consists of a single embedding module and a bi-similarity module of two similarity measures. After the support images and the query images pass through the convolution-based embedding module, the bi-similarity module learns feature maps according to two similarity measures of diverse characteristics. In this way, the model is enabled to learn more discriminative and less similarity-biased features from few shots of fine-grained images, such that the model generalization ability can be significantly improved. Through extensive experiments by slightly modifying established metric/similarity based networks, we show that the proposed approach produces a substantial improvement on several fine-grained image benchmark datasets. 

\textcolor{blue}{Codes are available at: https://github.com/spraise/BSNet}
\end{abstract}

\begin{IEEEkeywords}
Fine-grained image classification, Deep neural network, Few-shot learning, Metric learning
\end{IEEEkeywords}{}

\section{Introduction}

Deep learning models have achieved great success in visual recognition~\cite{lecun2015deep,dvornik2019diversity,lin2019partition}. These enormous neural networks often require a large number of labeled instances to learn their parameters~\cite{simonyan2014very,gu2015recent,lin2016tube}. However, it is hard to obtain labeled image in many cases~\cite{li2020oslnet}. Furthermore, human can adapt fast based on transferable knowledge from previous experience and learn new concepts from only few observations~\cite{li2020concise}. Thus, it is of great importance to design deep learning algorithms to have such abilities~\cite{li2020remarnet}. Few-shot learning~\cite{lifchitz2019dense,sung2018learning} rises in response to these urgent demands, which aims to learn latent patterns from few labeled images~\cite{Gidaris_2019_ICCV}.

Many few-shot classification algorithms can be grouped into two branches: meta-learning based methods~\cite{bertinetto2016learning_feedforward_oneshot, finn2017maml, finn2018probabilistic_maml, qi2018low_shot_learning_imprintedweights, li2019lgm, santoro2016meta, munkhdalai2017meta} and metric-based methods~\cite{vinyals2016matching, snell2017prototypical, li2019revisiting}. 
Meta-learning based methods focus on how to learn good initializations~\cite{finn2017maml}, optimizers~\cite{chen2019a}, or parameters. 
metric-based methods focus on how to learn good feature embeddings, similarity measures~\cite{lin2017learning}, or both of them~\cite{sung2018learning}. 
Due to their simplicity and effectiveness, metric-based methods have achieved the state-of-the-art performance on fine-grained images~\cite{chen2019a,li2019revisiting}.

Fine-grained image datasets are often used to evaluate the performance of few-shot classification algorithms~\cite{chen2019a,li2019revisiting,Qiao_2019_ICCV,Tokmakov_2019_ICCV,Hao_2019_ICCV,li2019TVT}, because they generally contain many sub-categories and each sub-category includes only a small amount of data~\cite{zhang2016picking}. Due to the similarity between these small sub-categories, a key consideration in few-shot fine-grained image classification is how to learn discriminative features from few labeled images~\cite{ma2019fine,7937818,chang2020devil}.

Metric-based methods in few-shot classification are mostly built on a single similarity measure~\cite{chen2019a,li2019revisiting,Qiao_2019_ICCV,Tokmakov_2019_ICCV,Hao_2019_ICCV,li2019TVT}. Intuitively, features obtained by adapting a single similarity metric are only discriminative in a single feature space. That is, using one single similarity measure may induce certain similarity bias that lowers the generalization ability of the model, in particular when the amount of training data is small. Thus, as illustrated in Figure~\ref{fig:motivation}, if the obtained features can simultaneously adapt two similarity measures of diverse characteristics, the samples within one class can
be mapped more compactly into a smaller feature space. This will result in a model embedding two diverse similarity measures, generating more discriminative features than using a single measure. 
\begin{figure*}[!t]
\begin{center}
\includegraphics[width=6.8in]{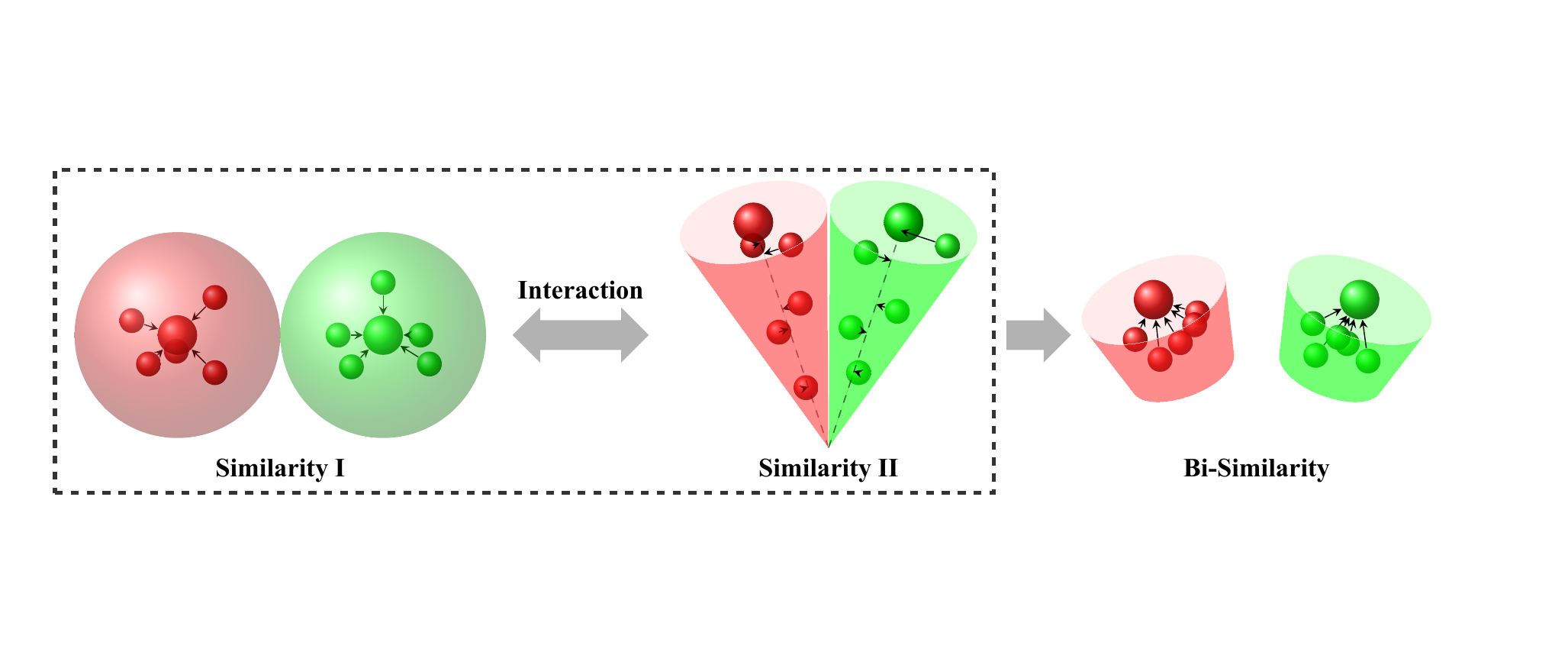}
\end{center}

\caption{Motivation of the proposed~\emph{Bi-Similarity Network} (\textit{BSNet}). Here we use the Euclidean distance and the cosine distance as the similarity measures in feature spaces. The Euclidean distance and the cosine distance are for Similarity I and Similarity II, respectively. Different colors indicate different predicted labels; larger balls are class prototypes.}
\label{fig:motivation}
\end{figure*}

Motivated by this, we develop a \textit{Bi-Similarity Network} (\textit{BSNet}). Within each task generated from a meta-training dataset, the proposed \textit{BSNet} employs two similarity measures,~\emph{e.g.}, the Euclidean distance and the cosine distance. 
To be more specific, the proposed \textit{BSNet} contains two modules as illustrated in Figure~\ref{fig:bi_similarity_network_structure}. The first module is a convolution-based embedding module, generating representations of query and support images. The learned representations are then fed forward to the second module, i.e.~our bi-similarity module, producing two similarity measurements between a query image and each class. During meta-training process, the total loss is the summation of the losses of both heads, which is used for back-propagating the network's parameters jointly. We evaluate the proposed method on several fine-grained benchmark datasets. Extensive experiments demonstrate the state-of-the-art performance of the proposed \textit{BSNet}. Our contributions are three-fold:
\begin{enumerate}
\item We propose a \textit{BSNet} that leverages two similarity measures and significantly improves the performance of four state-of-the-art few-shot classification methods~\cite{vinyals2016matching, snell2017prototypical, sung2018learning, li2019revisiting} on four benchmark fine-grained image datasets.

\item We demonstrate that the model complexity of \textit{BSNet} is less than the mean value of model complexities of two single-similarity networks, even though \textit{BSNet} contains more model parameters.

\item We demonstrate that the proposed \textit{BSNet} learns discriminative areas of the input images via visualization.

\end{enumerate}

\begin{figure*}[htbp]
\begin{center}
\includegraphics[width=6.5in]{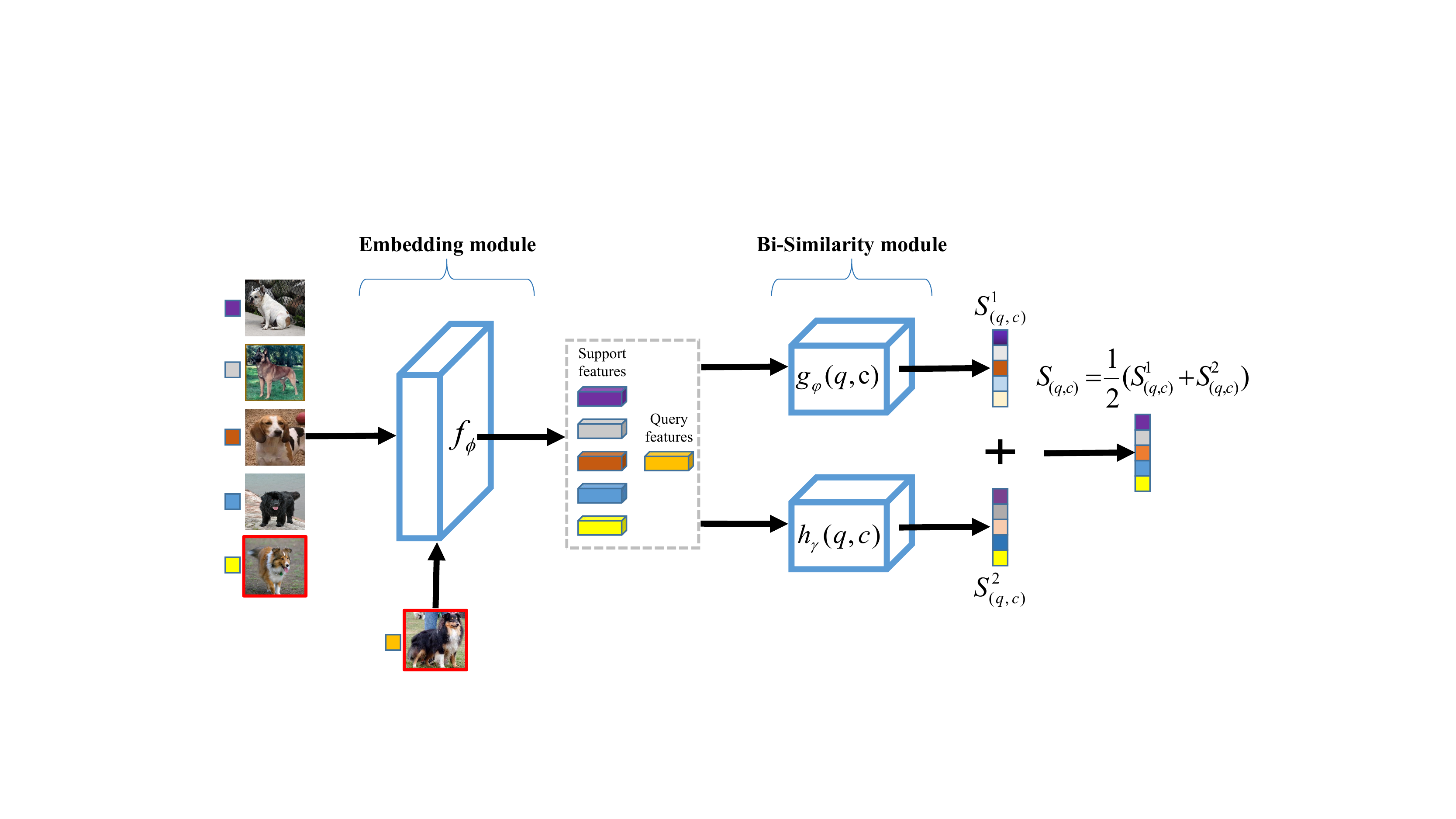}
\end{center}
\caption{Illustration of the proposed \textit{Bi-Similarity Network} (\textit{BSNet}). It consists of one embedding module $f_{\phi}$, followed by a bi-similarity module which outputs two similarity scores between a query image and $C$ class prototypes in $C$-way $K$-shot problems. $S^{k}_{(q,c)}$ denotes the $k$th similarity score between the $q$th query image $x_q$ and the $c$th class in a task.}
\label{fig:bi_similarity_network_structure}
\end{figure*}

\section{Related Work}
\label{sec:Related Works}
Recently, deep neural networks have been carefully re-designed to solve few-shot learning problems. In this work, we are particularly interested in metric-based few-shot learning models as they are most relevant to the proposed \textit{BSNet}.
Existing metric-based few-shot learning methods roughly fall into the following three groups according to their innovation: 

\paragraph{Learning feature embedding given a metric}  Koch et al.~\cite{koch2015siamese} used a Siamese convolutional neural network for one-shot leaning, in which convolution layers are of \textit{VGG}~\cite{VGG_oxford} styled structure and $L_1$ distance is used to compute the distance between image features. \textit{Matching Network}~\cite{vinyals2016matching} introduced attention mechanisms along with the cosine metric on deep features of images. Most of existing methods, for instance~\cite{snell2017prototypical, sung2018learning, allen2019infinite, kim2019edgelabelGNN}, have benefited from the episodic meta-training strategy proposed in~\cite{vinyals2016matching}, learning transferable knowledge from labeled support images to query images through this strategy. Wu et al.~\cite{Wu_2019_ICCV} added a deformed feature extractor and dual correlation attention module into the \textit{Relation Network}~\cite{sung2018learning}, and proposed position-aware relation network (or \textit{PARN}, for short). As a result, \textit{PARN} outperformed the \textit{Relation Network} on the Omniglot and Mini-ImageNet datasets.
This group of methods mainly learn a feature representation that adapts to a fixed metric. Unlike them, the proposed \textit{BSNet} learns metrics and a feature representation simultaneously.

\paragraph{Learning a prototype} Despite of similar meta-training procedures, these methods are built on different hypotheses.
\cite{snell2017prototypical} proposed the \textit{Prototype Network} that used the Euclidean distance as metric to classify query images to class-specific prototypes, producing linear decision boundaries in the learned embedding space as it implicitly assumed identical covariance matrices for all class-specific Gaussian distributions. Allen et al.~\cite{allen2019infinite} further extended the \textit{Prototype Network}~\cite{snell2017prototypical} by allowing multiple clusters for each class, and hence proposed infinite mixture prototype networks. Unlike these methods, the proposed \textit{BSNet} does not learn a prototype, but introduces the bi-similarity idea into the metric-based  few-shot learning models.

\paragraph{Learning a distance (or similarity) metric} Instead of using an existing metric, Sung et al.~\cite{sung2018learning} proposed the \textit{Relation Network} to compute similarity scores to measure the similarity between each query image and support images from each class.
Li et al.~\cite{li2019revisiting} constructed an image-to-class module after a CNN-based embedding module to compute the cosine similarity between deep local descriptors of a query image and its neighbors in each class, known as \textit{Deep Nearest Neighbor Neural Network} (or \textit{DN4} in short). Zhang et al.~\cite{zhang2019variationalfewshotlearn} learned Gaussian distributions for each class via incorporating evidence lower bound into the design of loss function and used the posterior densities of query images as the criterion, which produced quadratic decision boundaries as it allowed class-specific covariance matrices. It can be viewed as learning the Mahalanobis distance in deep feature space.
The proposed \textit{BSNet} is closely related to this group of methods. To the best of our knowledge, most current studies in few-shot learning only
focus on a single similarity measure. However, the proposed work constructs a network leveraging one module with two similarity measures.

\section{The Proposed Bi-Similarity Network}
\label{sec:Method}
\subsection{Problem Formulation}
\label{sec:problem_formulation}
Few-shot classification problems are often formalized as $C$-way $K$-shot classification problems, in which models are given $K$ labeled images from each of $C$ classes, and required to correctly classify unlabeled images. Unlike traditional classification problems where the label set in training set is identical to those in the validation and test sets, few-shot classification requires to classify novel classes after training. This requires that images used for training, validation and testing should come from disjoint label sets. To be more specific, given a dataset $\mathcal{D} = \{(x_i, y_i), y_i \in \mathcal{L} \}_{i=1}^{N}$, we divide it into three parts, namely $\mathcal{D}_{train} = \{(x_i, y_i), y_i \in \mathcal{L}_{train} \}_{i=1}^{N_{train}}$, $\mathcal{D}_{val} = \{(\tilde{x}_i, \tilde{y}_i), \tilde{y}_i \in \mathcal{L}_{val}  \}_{i = 1}^{N_{val}}$, $\mathcal{D}_{test} = \{ (x_i^{*}, y_i^{*}), y_i^{*} \in \mathcal{L}_{test} \}_{i=1}^{N_{test}}$, where $(x_i, y_i)$ represents the raw feature vector and label information for the $i$th image. The label sets, $\mathcal{L}_{train}$ and $\mathcal{L}_{val}$ and $\mathcal{L}_{test}$, are disjoint and their union is $\mathcal{L}$.

Following recent works~\cite{vinyals2016matching,snell2017prototypical,sung2018learning,chen2019a,li2019revisiting}, we organized our experiments in an episodic meta-training setting. To form a task $\mathcal{T}_{i}$ each time during episodic meta-training, we randomly select $C$ classes from $\mathcal{L}_{train}$, and randomly select $M$ images within each of these $C$ classes from $\mathcal{D}_{train}$. Within each of these selected classes, $M$ images are further separated into two sets with $K$ and $M-K$ images respectively, namely a support set $\mathcal{S}_{i}$ and a query set $\mathcal{Q}_{i}$. Similarly, we can define tasks $\tilde{\mathcal{T}}_{j}$ and $\mathcal{T}^{*}_{k}$ on $\mathcal{D}_{val}$ and $\mathcal{D}_{test}$ for meta-validation and meta-testing scenarios. We aim to train our neural networks to learn transferable deep bi-similarity knowledge from those meta-learning tasks, tune hyper-parameters through meta-validation, and report the generalization accuracy by taking the average of model's accuracy in meta-testing.

\subsection{Bi-Similarity Network}
\label{sec:bi_simlarity_networks}

\subsubsection{An Overview}

Our \textit{Bi-Similarity Network} (\textit{BSNet}) has one embedding module $f_{\phi}$, followed by a bi-similarity module consisting of two similarity measurement branches, $g_{\varphi}$ and $h_{\gamma}$, as shown in Figure~\ref{fig:bi_similarity_network_structure}.

Although we further generalize the proposed \textit{BSNet} by equipping it with other similarity measurements, i.e.\cite{vinyals2016matching, snell2017prototypical, li2019revisiting}, we only demonstrate the structure of the proposed neural network here to avoid redundant contents. The meta-training procedure of the proposed \textit{BSNet} is summarized in Algorithm~\ref{alg:bi_sim_network}. Note that in Algorithm~\ref{alg:bi_sim_network}, $\mathbbm{1}(a==b) = 1$ if and only if all elements of $a$ and $b$ are identical since we are using one-hot vectors for true/predicted labeling information for every image. A similar procedure, with the exception of updating parameters, is also used for calculating the validation and test accuracy. 

For each task $\mathcal{T}_{i}$, images $\{x_{k}^{(i)}\}_{k=1}^{M}$ from support set $\mathcal{S}_i$ and query set $\mathcal{Q}_i$ are fed into $f_{\phi}$, generating $f_{\phi}(x_{k}^{(i)})$ for all images in task $\mathcal{T}_i$. Then images' representations are further fed forward to our bi-similarity module. Let $x^{(i)}_{s,c}$ and $x^{(i)}_{q}$ denote the $s$th support image from class $c$ and the $q$th query image in task $\mathcal{T}_i$, respectively.

\subsubsection{The Training Procedure}
During the meta-training process, we assign the $q$th query image in task $\mathcal{T}_i$ according to each of the two similarity scores, $S^1_{(q,c)}$ and $S^2_{(q, c)}$. Then we obtain two one-hot vectors, $ \hat{y}^{(i, 1)}_{q}$ and $\hat{y}^{(i, 2)}_{q}$, as two predictions. Here, $S^1_{(q,c)}$ and $S^2_{(q, c)}$ refer to the two similarity scores between $x^{(i)}_{q}$ and the $c$th class, generated by the proposed \textit{BSNet}. The $\hat{y}^{(i, 1)}_{q}$ and $\hat{y}^{(i, 2)}_{q}$ only have element $1$ at $\arg \max_{c} S^{d}_{(q, c)}, d \in \{1, 2\}$, for $c \in \{ 1, \cdots, C\}$. That is, 
   \begin{equation}
   \label{eq:gen_one_hot_vec_y_hat_train}
     \begin{aligned}
          \hat{y}^{(i, 1)}_{q} &= [0, \cdots, \underbrace{1}_\text{$\arg \max_{c} S^1_{(q, c)} , c \in \{ 1, \cdots, C \}$}, \cdots, 0]  \: , \\
           \hat{y}^{(i, 2)}_{q} &= [0, \cdots, \underbrace{1}_\text{$\arg \max_{c} S^2_{(q, c)} , c \in \{ 1, \cdots, C \}$}, \cdots, 0] \: .
      \end{aligned}
   \end{equation}
Then we compute two loss values of the query image $x^{(i)}_{q}$ by comparing the two assignments $\hat{y}^{(i, 1)}_{q}$ and $\hat{y}^{(i, 2)}_{q}$ with the one-hot vector $y_{q}^{(i)}$ representing its ground-truth label, and take the average of the two values for back-propagation. The two loss values are given by
\begin{equation}
     \label{eq:bi_sim_nets_trainingLoss}
        \begin{aligned}
              & l_q^{(i, 1)} = \sum_j^{C}(\hat{y}_{q,j}^{(i, 1)} - y_{q,j}^{(i)})^2, q = 1, \cdots, |\mathcal{Q}_i| \: , \\
              & l_q^{(i, 2)} = \sum_j^{C}(\hat{y}_{q,j}^{(i, 2)} - y_{q,j}^{(i)})^2, q = 1, \cdots, |\mathcal{Q}_i|  \: , \\
               & l_q^{(i)} = ( l_q^{(i, 1)} +  l_q^{(i, 2)}) \: ,
        \end{aligned}
\end{equation}
where $|\mathcal{Q}_i|$ denotes the total number of query images in $\mathcal{Q}_i$.

\subsubsection{The Validation/Testing Procedure} During the validation/test process, we assign the query image to the class with the maximum average similarity score and generate a corresponding one-hot vector $\hat{y}^{(i)}_{q}$. That is,
\begin{equation}
\label{eq:gen_one_hot_vec_y_hat_testing}
    \begin{aligned}
       \hat{y}^{(i)}_{q} &= [0, \cdots, \underbrace{1}_\text{$\arg \max_{c} \frac{1}{2}\bigg(   S^1_{(q, c)} +   S^2_{(q, c)} \bigg), c \in \{ 1, \cdots, C \}$}, \cdots, 0] \: .
    \end{aligned}
\end{equation}

\begin{algorithm}[H]
\caption{Meta-Training procedure for BSNet under $C$-way $K$-shot scenarios}\label{alg:bi_sim_network} 
\begin{algorithmic}[1]
\Require{Input} Model $\mathcal{M}$, Optimizer, Number of episodes $B$, $\mathcal{D}_{train}$, $C$, $K$.
\For{$i$ from 1 to $B$}
\State Generate a task $\mathcal{T}_i$ from $\mathcal{D}_{train}$ \Comment{Details in Sec.~\ref{sec:problem_formulation}}
\State Further split $\mathcal{T}_i$ into $\mathcal{S}_{i}$ and $\mathcal{Q}_i$ \Comment{$C \times K$ images for $\mathcal{S}_i$, $C \times (M-K)$ images for $\mathcal{Q}_i$}
\State $L^{(i)} \gets 0$ \Comment{Initializing loss for the $i$th task}
\State $A^{(i)} \gets 0$ \Comment{Initializing accuracy for the $i$th task}
\For{$x_q^{(i)}$ in $\mathcal{Q}_i$} \Comment{$x_q^{(i)}$ is the $q$th query sample in $\mathcal{Q}_i$}
   \State Compute the $C \times 2$ similarity scores for $x_q^{(i)}$, i.e. $S^{d}_{(q,c)}$, for $d=1, 2$ and $c = 1, \cdots, C$.
   \State Compute two class assignments for $x_q^{(i)}$: $\arg \max_c S^1_{(q,c)} $ and $\arg \max_c S^2_{(q,c)}$
   \State Generate two corresponding one-hot vectors $\hat{y}_q^{(i, 1)}$ and $\hat{y}_q^{(i, 2)}$  according to Equation (\ref{eq:gen_one_hot_vec_y_hat_train}).
   \State Compute the loss of $\mathcal{M}$ for $\mathcal{S}_i$ and $x_q^{(i)}$ according to Equation (\ref{eq:bi_sim_nets_trainingLoss}), denoted by $l_q^{(i)}$. 
   \State $L^{(i)} \gets L^{(i)} + l_q^{(i)}$
   \State $A^{(i)} \gets A^{(i)} + \mathbbm{1}( \hat{y}_q^{(i)} == y_q^{(i)})$ \Comment{$\hat{y}_{q}^{(i)}$ is $\mathcal{M}$'s prediction for $x_q^{(i)}$ }
\EndFor

\State $L^{(i)} \gets \frac{L^{(i)}}{C \times |\mathcal{Q}_i|}$
\State $A^{(i)} \gets \frac{A^{(i)}}{|\mathcal{Q}_i|}$

\State Use $L^{(i)}$ to update $\mathcal{M}$'s parameters according to the optimizer \Comment{e.g Adam optimizer}
\EndFor
\State Stop training $\mathcal{M}$. Output mean accuracy $\frac{1}{B}\sum_{i=1}^{B} A^{(i)}$.
\end{algorithmic}
\end{algorithm}

\subsection{The Design of BSNet}
\label{sec:details_of_architecture}
In the proposed \textit{BSNet}, the feature embedding module and the bi-similarity module are key parts that affect overall performance. In terms of the feature embedding module $f_{\phi}$, we adopt convolution-based architecture, and will introduce the detailed design in Sec.~\ref{sec:Implementation Details}.

For the design of bi-similarity module, both of the two chosen similarity measurements should have good fitting ability. We take the \textit{relation module} as an example. The \textit{relation module} was proposed in~\cite{sung2018learning}, in which a query image's representation $f_{\phi}(x^{(i)}_{q})$ is concatenated to every support image's representation  $f_{\phi}(x^{(i)}_{s,c})$. In terms of $C$-way $1$-shot scenario ($|\mathcal{S}_i| = 1$), for each query image $x_{q}^{(i)}$, $C \times 1$ concatenated features are directly fed forward into two further convolution blocks and two fully connected layers to get $C$ similarity scores: 
\begin{equation}
\label{eq:deep_relation_score_K1}
    \begin{aligned}
       S^1_{(q, c)} = g_{\varphi} \bigg(  [f_{\phi}(x^{(i)}_{1,c}) || f_{\phi}(x^{(i)}_{q} ] \bigg), \; c = 1, \cdots C,
    \end{aligned}
\end{equation}
where $||$ denotes the concatenation operator.
While for the $K$-shot scenario, we take the mean of support images' feature maps from each class to act as $C$ prototypes and concatenate them ahead of the query image's feature map, and still obtain $C$ similarity scores for this query image:
\begin{equation}
\label{eq:deep_relation_score_K>1}{}{}
    \begin{aligned}
        S^1_{(q, c)} = g_{\varphi} \bigg(  [ \frac{1}{K} \sum_{s=1}^{K}f_{\phi}(x^{(i)}_{s, c}) || f_{\phi}(x^{(i)}_{q} ] \bigg), \; c = 1, \cdots C.
    \end{aligned}
\end{equation}

In addition, we also introduce a cosine similarity module designed by our own (details in Sec.~\ref{sec:Implementation Details}). Actually, many previous studies in few-shot classification have adopted the cosine distance~\cite{chen2019a, li2019revisiting}.
In this work, we combine the self-designed cosine similarity module with the similarity modules of \textit{Matching Network}~\cite{vinyals2016matching}, \textit{Prototype Network}~\cite{snell2017prototypical}, \textit{DN4}~\cite{li2019revisiting} in order to justify effectiveness of the proposed \textit{BSNet}. The details of these similarity modules are present in Sec.~\ref{sec:Implementation Details}.

Our cosine similarity module contains two convolution blocks ($h^{em}_{\gamma}$) and a cosine similarity layer ($h^{cos}_{\gamma}$). The cosine similarity between a query image $x_{q}^{(i)}$ and the $c$th class, conditioned on the support images from class $c$, is given as
\begin{equation}
\label{eq:cosine_similarity}
    \begin{aligned} 
         S^2_{(q, c)} =&  h^{cos}_{\gamma}\bigg(  h^{em}_{\gamma}\big( \frac{1}{K} \sum_{s=1}^{K} f_{\phi}(x^{(i)}_{s,c})\big), h^{em}_{\gamma}\big( f_{\phi}(x^{(i)}_{q})\big) \bigg) \: .
    \end{aligned}
\end{equation}
Note that a convolution block in our experiments refers to a convolution layer, a batch norm layer, a \textit{ReLU} layer and with or without a pooling layer. 
After the model computes two similarity scores between the query image and each class, we have two strategies for training and validation/testing processes, respectively, which have already been demonstrated in Sec.~\ref{sec:bi_simlarity_networks}. 


\subsection{The Empirical Rademacher Complexity of BSNet}

We start by introducing some notation. We define a single similarity network as a network consisting of a feature embedding module and a similarity module. Let $I \left ( x; \mathbf{W}_{em1}, \mathbf{W}_{s1} \right ) \in \mathcal{I}$, and $J \left ( x; \mathbf{W}_{em2}, \mathbf{W}_{s2} \right) \in \mathcal{J}$ denote two different similarity networks which contain the feature embedding module with the same structure, where $\mathbf{W}_{em1}$, $\mathbf{W}_{em2}$ are the parameters of the feature embeddings of $I$ and $J$, respectively, and $\mathbf{W}_{s1}$ and $\mathbf{W}_{s2}$ are the parameters of the similarity modules of $I$ and $J$, respectively. We define the \textit{BSNet} as learning a common feature embedding for $I$ and $J$ and adopting their similarity modules. Therefore, the Bi-similarity can be denoted as  $Z\left ( x; \mathbf{W}_{s1}, \mathbf{W}_{s2}, \mathbf{W}_{em} \right ) \in \mathcal{Z}$, where $\mathbf{W}_{em}$ is the parameter of the common feature embedding. 
The Rademacher complexity measures the richness of a family of functions which is defined as follows:
\begin{definition}~\cite{mohri2018foundations} Let $\mathcal{I}$ denote a family of real valued function and $S=\{x_1,x_2,\ldots,x_M\}$ be a fixed sample of size $M$. Then, the empirical Rademacher complexity of $\mathcal{I}$ with respect to $S$ is defined as
$\hat{R}_{S}\left(\mathcal{I}\right) = \mathtt{\bf{E}}_{\mathbf{\sigma}}\big[\sup_{I\in\mathcal{I}} \frac{1}{M}\sum_{i=1}^{M}\sigma_{i}I(x_i)\big]$, 
where $\mathbf{\sigma} = \{\sigma_{1},\sigma_{2},\ldots,\sigma_{M}\}$ with $\sigma_{i}$ independently distributed according to $P(\sigma_{i}=1)= P(\sigma_{i}=-1) = \frac{1}{2}$.
\end{definition}

\begin{theorem}
\label{theorem}

Let $ \hat{R}_{S}\left(\mathcal{I}\right)$, $ \hat{R}_{S}\left(\mathcal{J}\right)$, and $\hat{R}_{S}\left(\mathcal{Z}\right)$ denote the Rademacher complexities of the single similarity network family $\mathcal{I}$, the single similarity network family $\mathcal{J}$, and the Bi-similarity 
network family $\mathcal{Z}$.  We have $ \hat{R}_{S}\left(\mathcal{Z}\right) \leq  \frac{1}{2} \left( \hat{R}_{S}\left(\mathcal{I}\right) + \hat{R}_{S}\left(\mathcal{J}\right)\right)$.

\end{theorem}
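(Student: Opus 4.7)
The plan is to exploit the structural observation that a BSNet is essentially the average of two single-similarity networks that happen to share the same feature embedding. Concretely, from the validation/testing rule in Equation~(\ref{eq:gen_one_hot_vec_y_hat_testing}) the scalar score produced by $Z$ on an input $x$ can be written as
\begin{equation*}
Z(x;\mathbf{W}_{s1},\mathbf{W}_{s2},\mathbf{W}_{em}) \;=\; \tfrac{1}{2}\bigl[\,I(x;\mathbf{W}_{em},\mathbf{W}_{s1}) + J(x;\mathbf{W}_{em},\mathbf{W}_{s2})\,\bigr],
\end{equation*}
so every $Z\in\mathcal{Z}$ is an average of an $I\in\mathcal{I}$ and a $J\in\mathcal{J}$ in which the embedding parameters are tied ($\mathbf{W}_{em1}=\mathbf{W}_{em2}=\mathbf{W}_{em}$).

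Next, I would relax the tying constraint. Define the larger family
\begin{equation*}
\mathcal{Z}^{+}\;=\;\bigl\{\,\tfrac{1}{2}(I+J)\,:\,I\in\mathcal{I},\,J\in\mathcal{J}\,\bigr\},
\end{equation*}
where $I$ and $J$ are now allowed to use independent embedding parameters. Since any $Z\in\mathcal{Z}$ is obtained by choosing a particular $I$ and $J$ (whose embedding parts happen to coincide), we have $\mathcal{Z}\subseteq\mathcal{Z}^{+}$. Monotonicity of the empirical Rademacher complexity under set inclusion (the supremum over a smaller family is pointwise no larger) then gives $\hat{R}_S(\mathcal{Z})\le\hat{R}_S(\mathcal{Z}^{+})$.

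Finally, I would compute $\hat{R}_S(\mathcal{Z}^{+})$ by the standard ``sum of function classes'' calculation. Pulling out the factor $\tfrac{1}{2}$, splitting the supremum over the product $\mathcal{I}\times\mathcal{J}$ into separate suprema, and then splitting the expectation linearly yields
\begin{equation*}
\hat{R}_S(\mathcal{Z}^{+}) \;=\; \tfrac{1}{2}\,\mathbf{E}_{\sigma}\!\Bigl[\sup_{I\in\mathcal{I}}\tfrac{1}{M}\sum_{i}\sigma_i I(x_i)\Bigr] + \tfrac{1}{2}\,\mathbf{E}_{\sigma}\!\Bigl[\sup_{J\in\mathcal{J}}\tfrac{1}{M}\sum_{i}\sigma_i J(x_i)\Bigr] \;=\; \tfrac{1}{2}\bigl(\hat{R}_S(\mathcal{I})+\hat{R}_S(\mathcal{J})\bigr).
\end{equation*}
Chaining this with the previous inequality delivers the claimed bound.

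The main obstacle, and what needs to be argued carefully rather than computed, is the first step: justifying that the scalar output of a BSNet really equals $\tfrac{1}{2}(I+J)$ of two elements of the declared families $\mathcal{I}$ and $\mathcal{J}$. One must check that fixing the embedding parameters to a common value still yields elements of $\mathcal{I}$ and $\mathcal{J}$ respectively (which holds because the embedding modules of $I$, $J$, and $Z$ are assumed to have the same architecture and hence the same parameter space), and that the aggregation used to produce the scalar $Z(x)$ is indeed the arithmetic mean of the two branch scores (as dictated by Equation~(\ref{eq:gen_one_hot_vec_y_hat_testing})). Once this representation is settled, everything that follows is a clean application of the inclusion step and the textbook additivity of Rademacher complexity under sums and positive scalar multiplication.
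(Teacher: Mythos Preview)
Your proposal is correct and follows essentially the same route as the paper: the paper introduces the auxiliary family $\mathcal{P}=\{\tfrac{1}{2}(I+J):I\in\mathcal{I},\,J\in\mathcal{J}\}$ (your $\mathcal{Z}^{+}$), argues $\mathcal{Z}\subset\mathcal{P}$ via the shared-embedding observation, and then splits the supremum exactly as you do. Your write-up is in fact a bit more explicit than the paper's about why the inclusion holds (the tying/untying of $\mathbf{W}_{em}$) and about invoking Equation~(\ref{eq:gen_one_hot_vec_y_hat_testing}) to justify the $\tfrac{1}{2}(I+J)$ representation.
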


\begin{proof}
 Here, we introduce a network family $\mathcal{P}$, $P(x)= \frac{1}{2}\left( I(x) + J(x)\right)\in \mathcal{P}$, then $P(x)$ can be denoted by $P \left( x; \mathbf{W}_{s1}, \mathbf{W}_{s2}, \mathbf{W}_{em1}, \mathbf{W}_{em2}\right)$.
Following the notation above, for $\forall Z \in \mathcal{Z}$, $\exists P \in \mathcal{P}$ subject to  $Z(x)= P(x)$, so we have $\mathcal{Z} \subset \mathcal{P}$. Thus, we have
\begin{equation}
     \label{eq:proof}
        \begin{aligned}
             &\hat{R}_{S}\left(\mathcal{Z}\right)  = \mathtt{\bf{E}}_{\mathbf{\sigma}}\left[\sup_{Z\in\mathcal{Z}} \frac{1}{M}\sum_{i=1}^{M}\sigma_{i}Z(x_i)\right]  \\
             &  \leq \mathtt{\bf{E}}_{\mathbf{\sigma}}\left[\sup_{P\in\mathcal{P}} \frac{1}{M}\sum_{i=1}^{M}\sigma_{i} P\left(x_i\right)\right]\\
              &  = \mathtt{\bf{E}}_{\mathbf{\sigma}}\left[\sup_{I\in\mathcal{I},J\in\mathcal{J}} \frac{1}{M}\sum_{i=1}^{M}\sigma_{i} \left(\frac{1}{2}I\left(x_i\right)+\frac{1}{2}J\left(x_i\right)\right)\right]  \\
              &  = \frac{1}{2}\mathtt{\bf{E}}_{\mathbf{\sigma}}\left[\sup_{I\in\mathcal{I}} \frac{1}{M}\sum_{i=1}^{M}\sigma_{i}I\left(x_i\right) \right] + \frac{1}{2}\mathtt{\bf{E}}_{\mathbf{\sigma}}\left[\sup_{J\in\mathcal{J}} \frac{1}{M}\sum_{i=1}^{M}\sigma_{i}J\left(x_i\right) \right]\\
              & = \frac{1}{2} \left( \hat{R}_{S}\left(\mathcal{I}\right) + \hat{R}_{S}\left(\mathcal{J}\right)\right).           
        \end{aligned}      
\end{equation}\end{proof}

This theorem states that even though \textit{BSNet} has more model parameters, the Rademacher complexity of the proposed \textit{BSNet} is no more than the average of the Rademacher complexities of two individual networks. 


\begin{table*}[h]
\centering
\caption{Experiment Configuration (shown in columns). }
\label{tab:experiment_detials}
\resizebox{\textwidth}{!}{%
\begin{tabular}{@{}c|c|c|c|c|c@{}}
\toprule[1pt]
\multicolumn{6}{c}{Experiment Configuration}                                                                       \\ \hline
                   & Matching Network~\cite{vinyals2016matching}     & Prototype Network~\cite{snell2017prototypical}         & Relation Network~\cite{sung2018learning}   & Cosine Network     & DN4~\cite{li2019revisiting}                 \\ \hline
Input size         & \multicolumn{5}{c}{$3\times84\times84$}  \\ \hline
Embedding module $f_{\phi}$  & \multicolumn{4}{c|}{Conv4}                                         & Con64F                 \\ \hline
Feature map size      & \multicolumn{4}{c|}{$64\times19\times19$}                            & $64\times 21\times 21$ \\ \hline
Similarity module  & Matching module & Prototype module & Relation module & Cosine module & Image-to-Class module     \\ \hline
Optimizer          & \multicolumn{5}{c}{Adam (initial learning rate = $10^{-3}$, weight decay = 0)}                                       \\ \hline
Loss function     & NLL Loss         & Cross Entropy Loss & \multicolumn{2}{c|}{MSE Loss}    & Cross Entropy Loss       \\ \hline
Data augment     & \multicolumn{5}{c}{Random Sized Crop, Image Jitter, Random Horizontal Flip}                        \\ \hline
Training episodes     & \multicolumn{4}{c|}{1-shot: $60,000$, 5-shot: $40,000$}                & $300, 000$             \\ \hline
Training query sample size & \multicolumn{4}{c|}{16}                                              & 1-shot: 15, 5-shot: 10   \\ \hline
\end{tabular}%
}
\end{table*}

\begin{table*}[htp]
\renewcommand
\arraystretch{1.3}
\centering
\caption{Five-way few-shot classification performance on the \textit{Stanford-Cars} (Cars), \textit{Stanford-Dogs} (Dogs), \textit{FGVC-Aircraft} (Aircraft) and \textit{CUB-200-2011} (CUB) datasets. The methods include: \textit{Matching Network} (Matching), \textit{Prototype Network} (Prototype), \textit{Relation Network} (Relation) and \textit{Deep Nearest Neighbor Neural Network} (DN4) and the proposed \textit{BSNet}. We report the mean accuracy for each method, along with its $95\%$ confidence interval.}
\label{tab:5way_all4metrics}
\begin{tabular}{cllllllll}
\toprule[1pt]
                                             
\multirow{2}{*}{\it{Model}}            & \multicolumn{4}{c}{5-Way 5-shot Accuracy (\%)}                                                                                       & \multicolumn{4}{c}{5-Way 1-shot Accuracy (\%)}                        \\  \cmidrule(r){2-5}  \cmidrule(r){6-9} 
                                       & \multicolumn{1}{c}{\it{Cars}}& \multicolumn{1}{c}{\it{Dogs}}  & \multicolumn{1}{c}{\it{Aircraft}}& \multicolumn{1}{c}{\it{CUB}} & \multicolumn{1}{c}{\it{Cars}}  & \multicolumn{1}{c}{\it{Dogs}}& \multicolumn{1}{c}{\it{Aircraft}}& \multicolumn{1}{c}{\it{CUB}}\\ \toprule

Matching~\cite{vinyals2016matching}     & \textbf{64.74 $\pm$ 0.72} & 59.79 $\pm$ 0.72          & 73.75 $\pm$ 0.69          & 74.57 $\pm$ 0.73          & 44.73 $\pm$ 0.77          & \textbf{46.10 $\pm$ 0.86}  & \textbf{56.74 $\pm$ 0.87} & 60.06 $\pm$ 0.88          \\
BSNet (M\&C)                             & 63.58 $\pm$ 0.75          & \textbf{61.61 $\pm$ 0.69} & \textbf{75.92 $\pm$ 0.76} & \textbf{74.68 $\pm$ 0.71} & \textbf{44.93 $\pm$ 0.80} & 45.91 $\pm$ 0.81           & 56.53 $\pm$ 0.81          & \textbf{60.73 $\pm$ 0.94} \\ \bottomrule
Prototype~\cite{snell2017prototypical}  & 62.14 $\pm$ 0.76          & 61.58 $\pm$ 0.71          & 71.27 $\pm$ 0.67          & 75.06 $\pm$ 0.67          & 36.54 $\pm$ 0.74          & 40.81 $\pm$ 0.83           & 46.68 $\pm$ 0.81          & 50.67 $\pm$ 0.88          \\ 
BSNet (P\&C)                             & \textbf{63.72 $\pm$ 0.78} & \textbf{62.61 $\pm$ 0.73} & \textbf{77.35 $\pm$ 0.68} & \textbf{76.34 $\pm$ 0.65} & \textbf{44.56 $\pm$ 0.83} & \textbf{43.13 $\pm$ 0.85}  & \textbf{52.48 $\pm$ 0.88} & \textbf{55.81 $\pm$ 0.97} \\ \bottomrule
Relation~\cite{sung2018learning}        & 68.52 $\pm$ 0.78          & 66.20 $\pm$ 0.74          & 75.18 $\pm$ 0.74          & 77.87 $\pm$ 0.64          & 46.04 $\pm$ 0.91          & 47.35 $\pm$ 0.88           & 62.04 $\pm$ 0.92          & 63.94 $\pm$ 0.92          \\ 
BSNet (R\&C)                             & \textbf{73.47 $\pm$ 0.75} & \textbf{68.60 $\pm$ 0.73} & \textbf{80.25 $\pm$ 0.67} & \textbf{80.99 $\pm$ 0.63} & \textbf{54.12 $\pm$ 0.96} & \textbf{51.06 $\pm$ 0.94}  & \textbf{64.83 $\pm$ 1.00} & \textbf{65.89 $\pm$ 1.00} \\ \bottomrule
DN4~\cite{li2019revisiting}             & \textbf{87.47 $\pm$ 0.47} & 69.81 $\pm$ 0.69          & \textbf{84.07 $\pm$ 0.65} & 84.41 $\pm$ 0.58          & 34.12 $\pm$ 0.68          & 39.08 $\pm$ 0.76           & 60.65 $\pm$ 0.95          & 57.45 $\pm$ 0.89          \\ 
BSNet (D\&C)                             & 86.88 $\pm$ 0.50          & \textbf{71.90 $\pm$ 0.68} & 83.12 $\pm$ 0.68          & \textbf{85.39 $\pm$ 0.56} & \textbf{40.89 $\pm$ 0.77} & \textbf{43.42 $\pm$ 0.86}  & \textbf{62.86 $\pm$ 0.96} & \textbf{62.84 $\pm$ 0.95} \\ \bottomrule
\end{tabular}
\end{table*}

\section{Experimental Results and Discussions}
\label{sec:Experimental_Results}
We evaluate the proposed approach on few-shot classification tasks. This evaluation
serves four purposes: 
\begin{itemize}
  \item To compare the proposed \textit{BSNet} with the state-of-the-art methods on few-shot fine-grained classification (Sec.~\ref{sec:sota1});
   \item To investigate the generalization ability of \textit{BSNet} by changing the backbone network (Sec.~\ref{sec:Backbones});
    \item To study the effectiveness of each similarity module in the proposed \textit{BSNet} (Sec.~\ref{sec:on Effectiveness of of Two-branch});
    \item To demonstrate that the proposed \textit{BSNet} can learn a reduced number of class-discriminative features (Sec.~\ref{sec:Feature Visualization});
     \item To investigate the effect of changing similarity modules on \textit{BSNet} (Sec.~\ref{sec:ChangingSimilarityModules}).
\end{itemize}

\subsection{Datasets}
\label{sec:datasets}
We conducted all the experiments on four benchmark fine-grained datasets, \textit{FGVC-Aircraft}, \textit{Stanford-Cars}, \textit{Stanford-Dogs} and \textit{CUB-200-2011}. For each dataset, we divided them into meta-training set $\mathcal{D}_{train}$, meta-validation set $\mathcal{D}_{val}$ and meta-test set $\mathcal{D}_{test}$ in a ratio of $2:1:1$. All images in the four datasets are resized to $84 \times 84$. We did not use boundary boxes for all the images.

\textit{\textbf{FGVC-Aircraft}}~\cite{maji2013fine} is a classic dataset in fine-grained image classification and recognition research. It contains four-level hierarchical notations: \textit{model}, \textit{variant}, \textit{family} and \textit{manufacturer}. According to the \textit{variant} level, it can be divided into 100 categories (classification annotation commonly used in fine-grained image classification), which are the annotation level we used. \textit{\textbf{Stanford-Cars}}~\cite{krause20133d} is also a benchmark dataset for fine-grained classification, which contains 16,185 images of 196 classes of cars, and categories of this dataset are mainly divided based on the brand, model, and year of the car. \textit{\textbf{Stanford-Dogs}}~\cite{khosla2011novel} is a challenging and large-scale dataset that aims at fine-grained image categorization, including 20,580 annotated images of 120 breeds of dogs from around the world. \textit{\textbf{CUB-200-2011}}~\cite{wah2011caltech} contains 11,788 images from 200 bird species. This dataset is also a classic benchmark for fine-grained image classification.

\subsection{Implementation Details}
\label{sec:Implementation Details}
In order to prove the effectiveness of the proposed method, we choose several metric/similarity based few-shot classification networks, \textit{Matching Network}~\cite{vinyals2016matching}, 
\textit{Prototype Network}~\cite{snell2017prototypical}, \textit{Relation Network}~\cite{sung2018learning}, \textit{Deep Nearest Neighbor Neural Network} (or \textit{DN4}, for short)~\cite{li2019revisiting}, to be the baseline of the proposed method. At the same time, we combine these diverse similarity measurement modules with a self-designed cosine module to construct the proposed \textit{BSNet}. We present details of our experiments in this section, which include information about experiment parameters, the matching module, the prototype module, the relation module, the image-to-class module and the bi-similarity module. Together with Table~\ref{tab:experiment_detials}, it provides a detailed summary for all implementations in our experiments.

All methods are trained from scratch. Some important parameters during training are shown in Table~\ref{tab:experiment_detials}. In particular, for the optimizer of \textit{DN4} and bi-similarity experiments involved \textit{DN4}, the learning rate is reduced by half every $100,000$ episodes, while the other methods do not adopt this strategy. In all experiments, we use the meta-validation set to select the optimal setting. At the meta-testing stage, we report the mean accuracy of 600 randomly generated testing episodes as well as $95\%$ confidence intervals. i.e., $ \mathrm{Mean}\pm 1.96 \times \frac{\mathrm{Std}}{\sqrt{600}} $. Particularly, in terms of \textit{DN4}, the above test procedure is repeated five times, and the mean accuracy with 95\% confidence intervals are reported. We will introduce the embedding module and several similarity measurement modules used in our experiments in detail below, as well as the self-defined Cosine module and Bi-similarity module proposed by us.

\textbf{Embedding module:}
We construct four convolution blocks to be our embedding module $f_{\phi}$ to learn a shared representation in the proposed \textit{BSNet}. Following~\cite{chen2019a}, we used the embedding module \textit{Conv4} for \textit{Matching Network}, \textit{Prototype Network}, \textit{Relation Network} and \textit{Cosine Network}. Specifically, each convolution block of \textit{Conv4} has $3 \times 3$ convolution of 64 filters, followed by batch normalization and a \textit{ReLU} activation function. 
Following~\cite{li2019revisiting}, we used the embedding module \textit{Conv64F} for DN4. In \textit{Conv64F}, each convolution block employs $64$ $3 \times 3$ convolution filters without padding, followed by batch normalization and a \textit{Leaky ReLU} activation function. The $2 \times 2$ max-pooling layer is used in the first two blocks. After passing through the \textit{Conv64F} module, a $3 \times 84 \times 84$ input image is now represented by a $64 \times 21 \times 21$ feature map.


\textbf{Matching module:}
The matching module is composed of three parts: a memory network, a metric module and an attention module. The memory network uses a simple bidirectional \textit{LSTM}, which further processes the support features from the embedding module $f_{\phi}$, so that these features can contain the context information. The metric module uses a cosine distance measure, and a softmax layer is used in the attention module. In the process of model training, a single support set and a query sample will be used as input such that feature extraction will be carried out at the same time. After that, these features will be further processed through the memory network, and then the final prediction value of a query image depends on distance measurements and the attention module.

\textbf{Prototype module:}
The prototype module is a module which includes an Euclidean distance algorithm to calculate the distance of query features and support features from the embedding module $f_{\phi}$. The specific calculation formula is $-{\left\Vert f_{\phi}(x_q) - f_{\phi}(x_s) \right\Vert}^2$, where $f_{\phi}(x_q)$ represents the flattened vector of a query feature and $f_{\phi}(x_s)$ represents the flattened vector of a support prototype feature. For the 1-shot classification, the feature of a support sample within a class is the prototype of the class. For 5-shot, the mean of the features of all 5 support samples in the same class is the corresponding class prototype. The Euclidean distance between a query feature and a class prototype represents the dissimilarity between the query sample and the class.

\textbf{Relation module:}
The relation module consists of two convolution blocks and two full connection layers (or  \textit{FC}, for short). Each convolution block is composed of $3 \times 3$ convolution of 64 filters, followed by batch normalization, a \textit{ReLU} non-linearity function and a $2 \times 2$ max-pooling. The padding parameter of both blocks is set to be 0. The first \textit{FC} layer is followed by a \textit{ReLU} function, and the second \textit{FC} layer is followed by a sigmoid function. The input sizes of the first and the second \textit{FC} layers are 576 and 8, respectively, and the final output size is 1. During the training process, we concatenated the feature of a query feature to every class prototype (i.e. mean of support image's features within each class), resulting in 128-dimensional relation pairs. By passing these relation pairs into the relation module, it computes similarity scores between the query sample and each class.

\textbf{Image-to-Class module:}
Different from other methods, we use \textit{Conv64F} to be the embedding module of image-to-class module, of which the output size is $64 \times 21 \times 21$. We divide the features into 441 ($21 \times 21$) 64-dimensional local features. The image-to-class module consists of a cosine measurement module and a \textit{K-Nearest-Neighbours} classifier (\textit{KNN} for short). The cosine measurement module calculates the cosine distance between each local features of the query sample and each local features of the support samples. In terms of 5-way 1-shot classification problems, we calculate the cosine distance of the local features between the query sample and each of the 5 support samples, get the cosine distance of size $5 \times 441 \times 441$. Then the \textit{KNN} classifier takes the category corresponding to the maximum of sum of top-3 cosine distances in each class as the prediction category of the query sample (i.e.~using $K=3$). For 5-shot experiments, we provide 5 support features of each class with 2205 ($21 \times 21 \times 5$) 64-dimensional local features.
\begin{table}[!t]
\renewcommand
\arraystretch{1.3}
\centering
\caption{Five-way few-shot classification performance of the proposed \textit{BSNet} by adjusting the Weights in the loss function, $l_q$, on the CUB-200-2011 dataset. Here $l_q =  \lambda \times l_q^{(1)} + \beta \times l_q^{(2)}$. We report the mean accuracy of 600 randomly generated testing episodes for \textit{BSNet} (R\&C), with its $95\%$ confidence interval.}
\label{tab:tuneweights}
\begin{tabular}{llll}
\toprule[1pt]
\multirow{3}{*}{$\lambda$} & \multirow{3}{*}{$\beta$}            & \multicolumn{2}{c}{5-Way Accuracy (\%)}                               \\ \cline{3-4}

           &            & \multicolumn{2}{c}{\it{CUB-200-2011} }                \\
           &             & 1-shot                    & 5-shot                    \\ \toprule
1          & 0.1                & 65.65 $\pm$ 0.97          & 78.80 $\pm$ 0.70          \\
1          & 0.3              & 63.86 $\pm$ 0.91          & 78.83 $\pm$ 0.62          \\
1          & 0.5                & 64.75 $\pm$ 0.97          & 80.01 $\pm$ 0.65          \\
1          & 0.7                  & 63.55 $\pm$ 0.97          & 79.08 $\pm$ 0.64          \\
1          & 0.9               & 64.48 $\pm$ 1.00          & 79.08 $\pm$ 0.63          \\
0.1        & 1                   & 64.36 $\pm$ 0.94          & 79.22 $\pm$ 0.67          \\
0.3        & 1                & 64.38 $\pm$ 1.00          & 79.55 $\pm$ 0.68          \\
0.5        & 1            & 64.96 $\pm$ 0.98          & 79.41 $\pm$ 0.68          \\
0.7        & 1                   & 63.97 $\pm$ 1.02          & 78.59 $\pm$ 0.67          \\
0.9        & 1                 & 64.92 $\pm$ 0.96          & 78.53 $\pm$ 0.69          \\
\textbf{1} & \textbf{1}       & \textbf{65.89 $\pm$ 1.00} & \textbf{80.99 $\pm$ 0.63} \\ \bottomrule

\end{tabular}
\end{table}

\begin{table*}[!htp]
\renewcommand
\arraystretch{1.3}
\centering
\caption{Few-shot classification of changing the Backbone network on the Stanford-Cars and CUB-200-2011 datasets. We report the mean accuracy for each method, along with its $95\%$ confidence interval.}
\begin{tabular}{cccccc}
\toprule[1pt]
\multirow{3}{*}{Backbone}                      & \multirow{3}{*}{Model}      & \multicolumn{4}{c}{5-Way Accuracy (\%)}                                                                       \\ \cline{3-6}
                                               &                             & \multicolumn{2}{c}{5-shot}                  & \multicolumn{2}{c}{1-shot}                \\
                                               &                             & \multicolumn{1}{c}{\it{Stanford-Cars}}      & \multicolumn{1}{c}{\it{CUB-200-2011}}      & \multicolumn{1}{c}{\it{Stanford-Cars}}      & \multicolumn{1}{c}{\it{CUB-200-2011}}                    \\ \toprule

\multirow{8}{*}{Conv4}      & Matching Network~\cite{vinyals2016matching}    & \textbf{64.74 $\pm$ 0.72}                   & 74.57 $\pm$ 0.73                           & 44.73 $\pm$ 0.77                   & 60.06 $\pm$ 0.88                            \\
                            & BSNet (M\&C)                                & 63.58 $\pm$ 0.75                            & \textbf{74.68 $\pm$ 0.71}                  & \textbf{44.93 $\pm$ 0.80}          & \textbf{60.73 $\pm$ 0.94}                   \\ \cline{2-6}
                            & Prototype Network~\cite{snell2017prototypical} & 62.14 $\pm$ 0.76                            & 75.06 $\pm$ 0.67                           & 36.54 $\pm$ 0.74                   & 50.67 $\pm$ 0.88                            \\
                            & BSNet (P\&C)                                & \textbf{63.72 $\pm$ 0.78}                   & \textbf{76.34 $\pm$ 0.65}                  & \textbf{44.56 $\pm$ 0.83}          & \textbf{55.81 $\pm$ 0.97}                   \\ \cline{2-6}
                            & Relation Network~\cite{sung2018learning}       & 68.52 $\pm$ 0.78                            & 77.87 $\pm$ 0.64                           & 46.04 $\pm$ 0.91                   & 63.94 $\pm$ 0.92                            \\
                            & BSNet (R\&C)                   & \textbf{73.47 $\pm$ 0.75}                   & \textbf{80.99 $\pm$ 0.63}                  & \textbf{54.12 $\pm$ 0.96}          & \textbf{65.89 $\pm$ 1.00}                   \\ \cline{2-6}
                            & DN4~\cite{li2019revisiting}                    & 86.59 $\pm$ 0.54                            & 82.97 $\pm$ 0.66                           & 59.57 $\pm$ 0.87                   & 64.02 $\pm$ 0.92                            \\
                            & BSNet (D\&C)                               & \textbf{86.68 $\pm$ 0.54}                   & \textbf{84.18 $\pm$ 0.64}                  & \textbf{61.41 $\pm$ 0.92}          & \textbf{65.20 $\pm$ 0.92}                   \\ \bottomrule 
\multirow{8}{*}{Conv6}      & Matching Network~\cite{vinyals2016matching}    & 71.65 $\pm$ 0.72                            & 76.36 $\pm$ 0.60                           & 57.00 $\pm$ 0.94                   & 61.05 $\pm$ 0.93                            \\
                            & BSNet (M\&C)                               & \textbf{74.50 $\pm$ 0.75}                   & \textbf{78.99 $\pm$ 0.65}                  & \textbf{58.32 $\pm$ 0.95}          & \textbf{67.60 $\pm$ 0.91}                   \\ \cline{2-6}
                            & Prototype Network~\cite{snell2017prototypical} & 74.55 $\pm$ 0.71                            & \textbf{80.59 $\pm$ 0.61}                  & 53.92 $\pm$ 0.93                   & 63.63 $\pm$ 0.93                            \\
                            & BSNet (P\&C)                                & \textbf{74.61 $\pm$ 0.71}                   & 80.51 $\pm$ 0.62                           & \textbf{54.25 $\pm$ 0.96}          & \textbf{64.93 $\pm$ 0.99}                   \\ \cline{2-6}
                            & Relation Network~\cite{sung2018learning}       & 72.86 $\pm$ 0.73                            & 79.16 $\pm$ 0.68                           & 47.36 $\pm$ 0.93                   & 62.93 $\pm$ 0.97                            \\
                            & BSNet ( R\&C)                   & \textbf{76.90 $\pm$ 0.70}                   & \textbf{80.32 $\pm$ 0.62}                  & \textbf{55.27 $\pm$ 1.00}          & \textbf{66.19 $\pm$ 0.98}                   \\ \cline{2-6}
                            & DN4~\cite{li2019revisiting}                    & \textbf{87.04 $\pm$ 0.54}                   & \textbf{83.81 $\pm$ 0.65}                  & 42.77 $\pm$ 0.81                   & 66.13 $\pm$ 0.94                            \\
                            & BSNet (D\&C)                                & 86.17 $\pm$ 0.57                            & 83.43 $\pm$ 0.64                           & \textbf{57.16 $\pm$ 0.97}          & \textbf{67.58 $\pm$ 0.95}                   \\ \bottomrule 
\multirow{8}{*}{Conv8}      & Matching Network~\cite{vinyals2016matching}    & 71.84 $\pm$ 0.73                            & 74.84 $\pm$ 0.62                           & 57.41 $\pm$ 0.96                   & 63.44 $\pm$ 0.95                            \\
                            & BSNet (M\&C)                                & \textbf{72.26 $\pm$ 0.72}                   & \textbf{76.96 $\pm$ 0.63}                  & \textbf{57.86 $\pm$ 0.91}          & \textbf{66.06 $\pm$ 0.98}                   \\ \cline{2-6}
                            & Prototype Network~\cite{snell2017prototypical} & \textbf{76.69 $\pm$ 0.68}                   & 81.44 $\pm$ 0.61                           & \textbf{57.66 $\pm$ 0.99}          & \textbf{65.54 $\pm$ 0.97}                   \\
                            & BSNet (P\&C)                                & 76.30 $\pm$ 0.70                            & \textbf{81.77 $\pm$ 0.61}                  & 53.19 $\pm$ 0.96                   & 65.50 $\pm$ 0.99                            \\ \cline{2-6}
                            & Relation Network~\cite{sung2018learning}       & 73.09 $\pm$ 0.75                            & 78.63 $\pm$ 0.66                           & 47.45 $\pm$ 0.94                   & 62.47 $\pm$ 0.99                            \\
                            & BSNet (R\&C)                   & \textbf{75.29 $\pm$ 0.72}                   & \textbf{80.60 $\pm$ 0.63}                  & \textbf{53.77 $\pm$ 0.94}          & \textbf{63.23 $\pm$ 1.01}                   \\ \cline{2-6}
                            & DN4~\cite{li2019revisiting}                    & \textbf{88.14 $\pm$ 0.50}                   & \textbf{84.41 $\pm$ 0.63}                  & \textbf{61.73 $\pm$ 0.97}          & 64.26 $\pm$ 1.01                            \\
                            & BSNet (D\&C)                               & 84.66 $\pm$ 0.59                            & 81.06 $\pm$ 0.72                           & 61.20 $\pm$ 0.97                   & \textbf{64.28 $\pm$ 1.01}                   \\ \bottomrule 

\multirow{2}{*}{ResNet-10}     & Relation Network~\cite{sung2018learning} & 83.85 $\pm$ 0.64          & 81.67 $\pm$ 0.58          & 63.55 $\pm$ 1.04          & \textbf{69.95 $\pm$ 0.97} \\
                               & BSNet (R\&C)                      & \textbf{84.09 $\pm$ 0.66} & \textbf{82.85 $\pm$ 0.61} & \textbf{66.97 $\pm$ 0.99} & 69.73 $\pm$ 0.97          \\ \bottomrule 

\multirow{2}{*}{ResNet-18}     & Relation Network~\cite{sung2018learning} & 83.61 $\pm$ 0.68          & 83.04 $\pm$ 0.60          & \textbf{61.02 $\pm$ 1.04} & 69.58 $\pm$ 0.97          \\ 
                               & BSNet (R\&C)                      & \textbf{85.28 $\pm$ 0.64} & \textbf{83.24 $\pm$ 0.60} & 60.36 $\pm$ 0.98          & \textbf{69.61 $\pm$ 0.92} \\ \bottomrule 

\multirow{2}{*}{ResNet-34} &Relation Network~\cite{sung2018learning} & \textbf{89.07 $\pm$  0.56} & \textbf{84.21 $\pm$ 0.55} & \textbf{76.35 $\pm$ 1.00} & \textbf{71.89 $\pm$ 1.04} \\ 
                               & BSNet (R\&C)                      & 85.30 $\pm$ 0.68 & 82.12 $\pm$ 0.59          & 72.72 $\pm$ 1.02          & 68.98 $\pm$ 1.04          \\ \bottomrule 

\end{tabular}
\label{tab:backbone}
\end{table*}

\textbf{Cosine module:} The cosine module (i.e. the cosine similarity branch in Section~\ref{sec:details_of_architecture}) consists of two convolution blocks and a cosine similarity measurement layer. Each convolution block is composed of $3 \times 3$ convolution of 64 filters, followed by batch normalization, a \textit{ReLU} non-linearity activation function and a pooling layer. The padding of each block is 1. 
The first convolution block is followed by a $2 \times 2$ max-poling, and the second convolution block is followed by a $2 \times 2$ avg-pool. 
Our cosine module does not require to concatenate representations of a query image and class prototypes, and feature maps of the query image and class prototypes are fed independently to the convolution blocks. Then these new feature maps are flattened and pass through a cosine similarity layer to compute the cosine distance between the query image and each class. Cosine similarity scores represent the similarity between query samples and each class's prototype given support samples. For 5-shot learning, a class prototype is the mean of the feature maps of 5 support images from an identical class, while it is directly the feature map of a support image in 1-shot scenarios.

\textbf{Bi-similarity module:}
We combine the self-proposed cosine module with other modules to build our \textit{bi-similarity module}. That is, $h_{\gamma}$ is our cosine module, and $g_{\varphi}$ can be any other modules mentioned above. Note that the parameter setting of \textit{BSNet}'s experiments is the same as that of $g_{\varphi}$ similarity experiments. In this way, features from the embedded module $f_{\phi}$ enter into $h_{\gamma}$ and $g_{\varphi}$ respectively, producing bi-similarity prediction values. At the meta-training stage, we update our network with the average of the prediction losses of the two similarity branches (Equation (\ref{eq:bi_sim_nets_trainingLoss})). At the meta-testing stage, we use the average of two similarity scores generated by the two branches to produce the final prediction (Equation (\ref{eq:gen_one_hot_vec_y_hat_testing})).

\begin{figure*}[h]
\centering
\subfigure{
\begin{minipage}[t]{3.4in}
\centering
\includegraphics[width=3.2in]{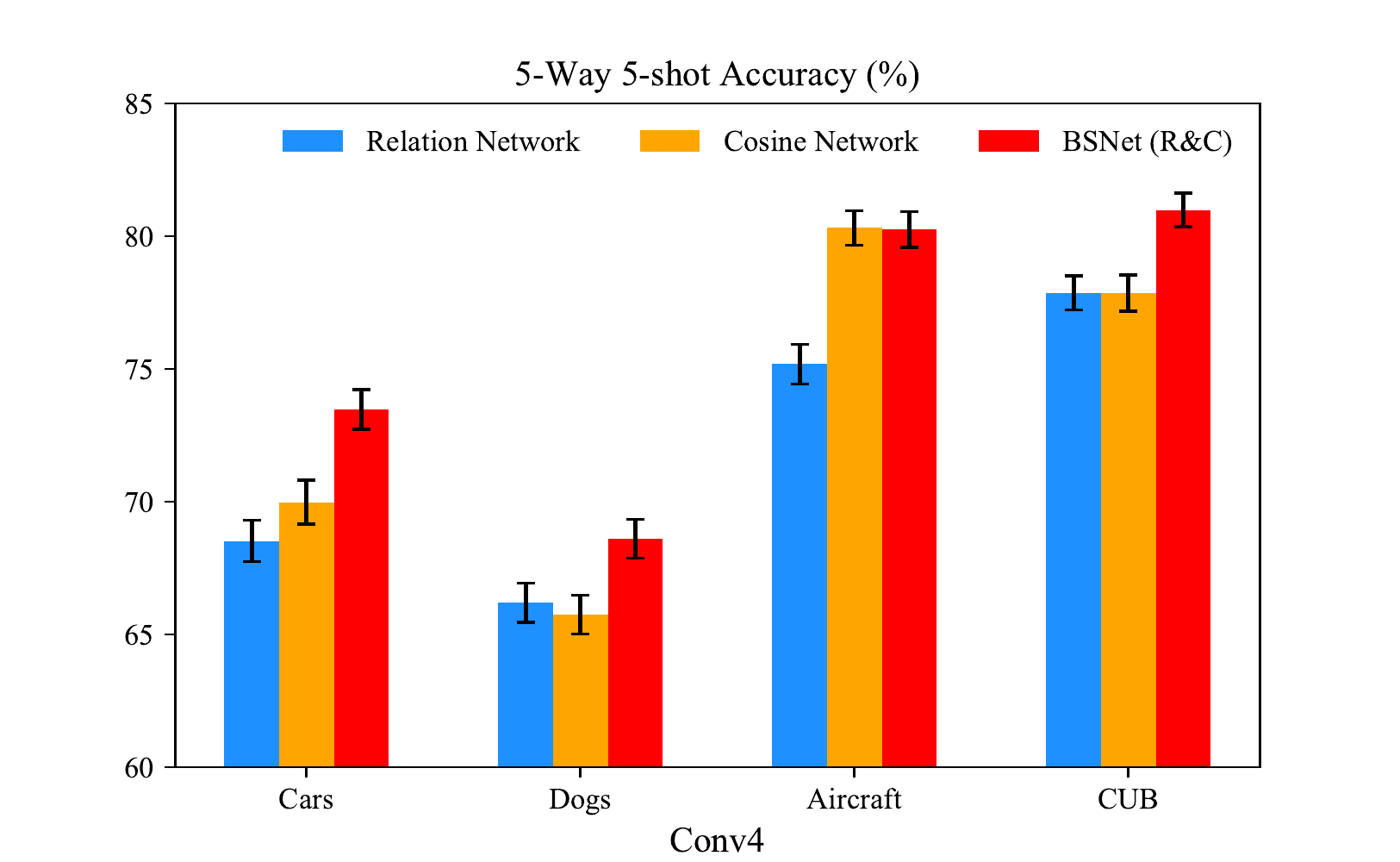}
\end{minipage}%
}%
\subfigure{
\begin{minipage}[t]{3.4in}
\centering
\includegraphics[width=3.2in]{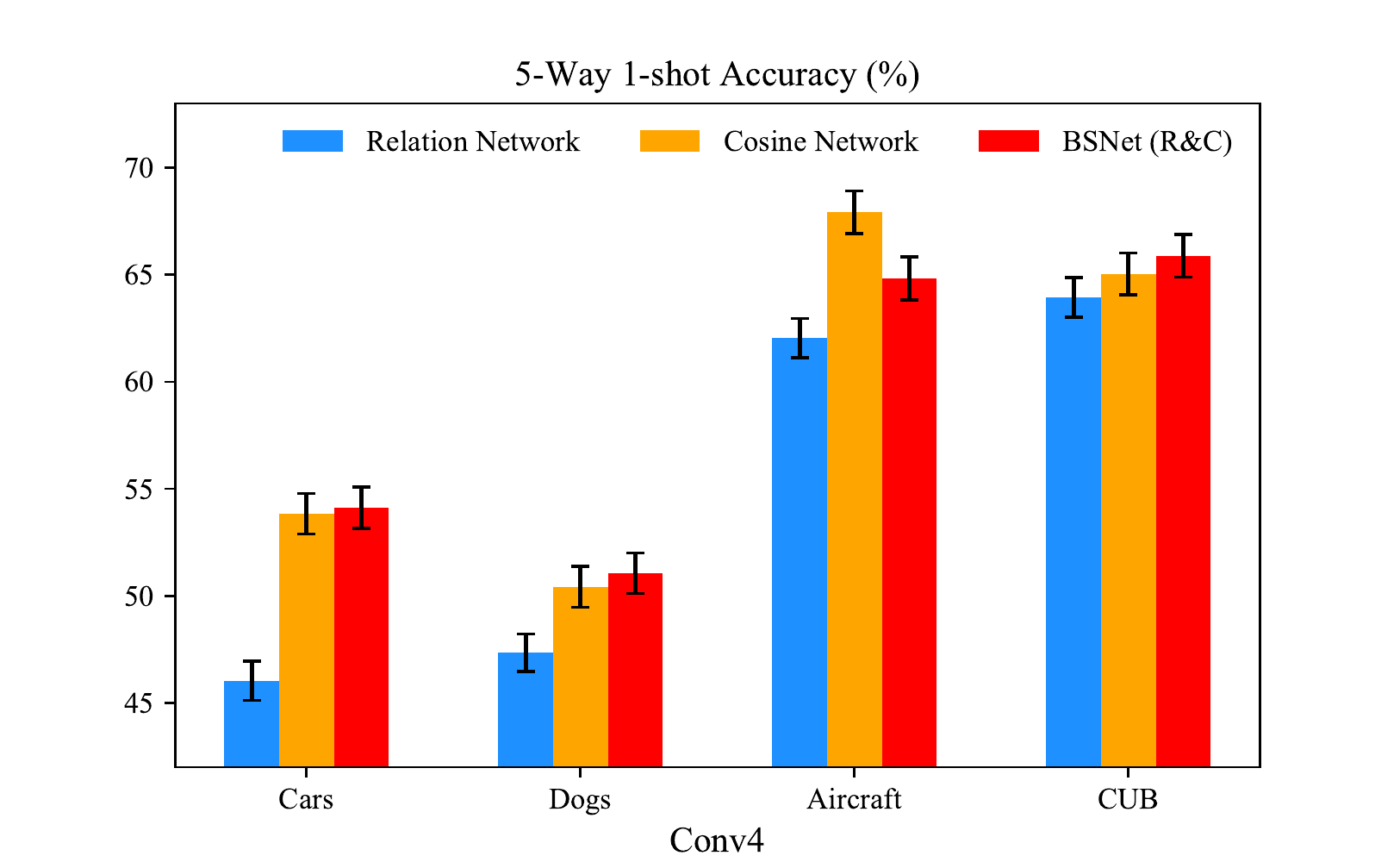}
\end{minipage}%
}%

\subfigure{
\begin{minipage}[t]{3.4in}
\centering
\includegraphics[width=3.0in]{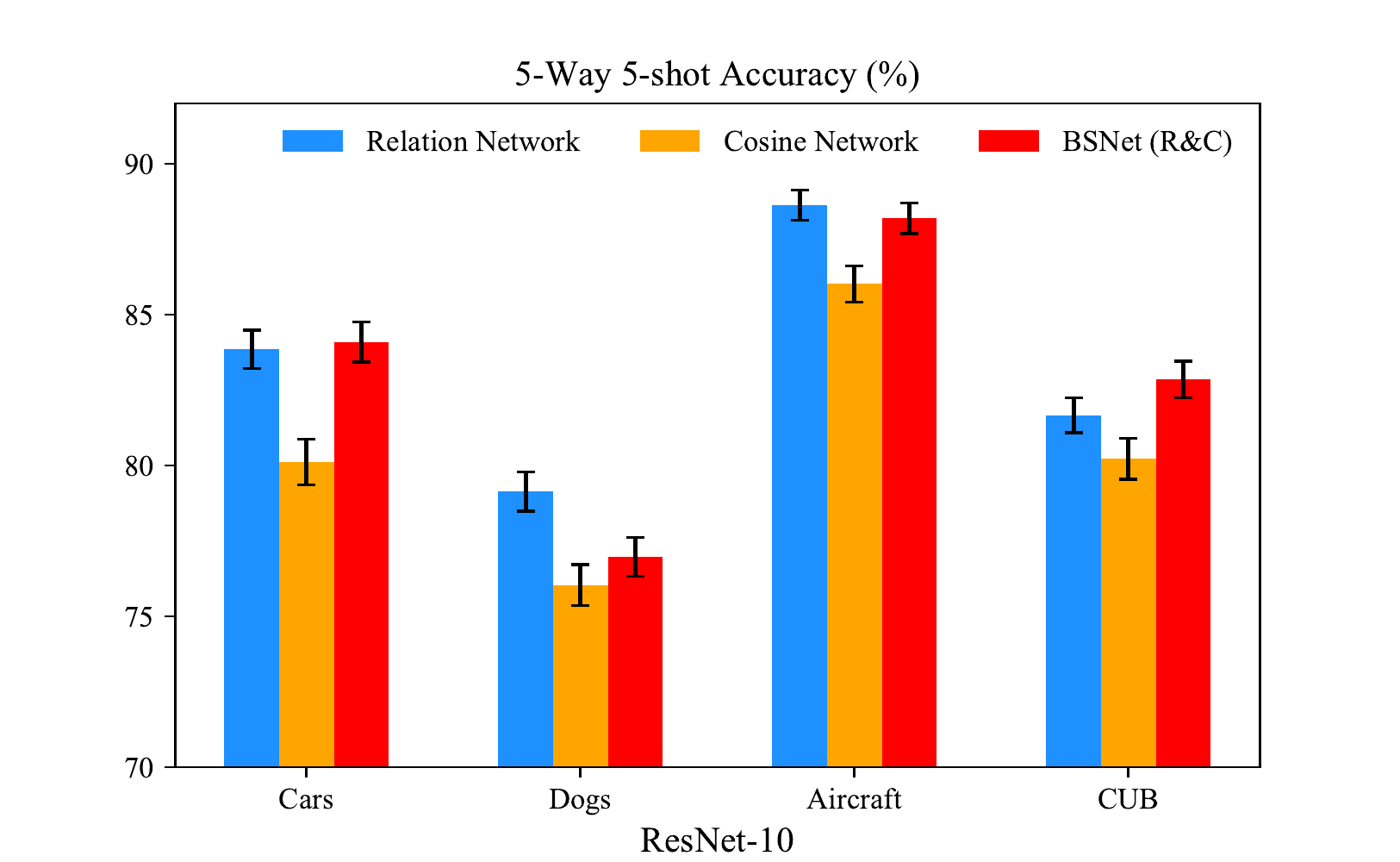}
\end{minipage}
}%
\subfigure{
\begin{minipage}[t]{3.4in}
\centering
\includegraphics[width=3.0in]{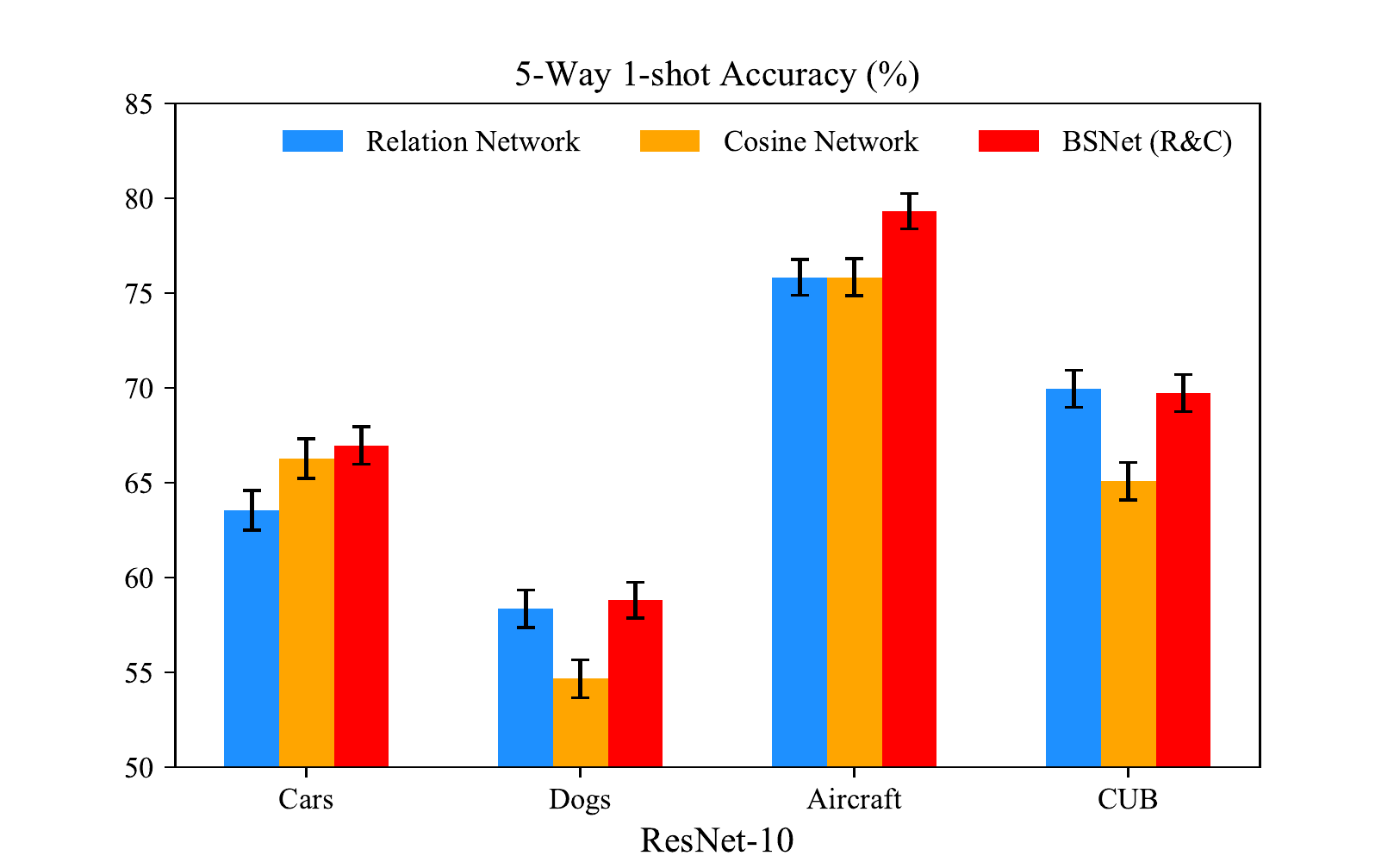}
\end{minipage}
}%

\caption{Ablation study: effectiveness of the two-branch similarity. Five-way few-shot classification performance of \textit{Relation Network}, \textit{Cosine Network} and the proposed \textit{BSNet} (R\&C), on the \textit{Stanford-Cars} (Cars), \textit{Stanford-Dogs} (Dogs), \textit{FGVC-Aircraft} (Aircraft) and \textit{CUB-200-2011} (CUB) datasets. The mean accuracy of 600 randomly generated testing episodes for each method is reported.}
\label{fig:ablation}
\end{figure*}

\subsection{Comparison with State-of-the-art Methods on Few Shot Classification}
\label{sec:sota1}

To evaluate the performance of the proposed \textit{BSNet}, we compare it with several metric/similarity based few-shot classification networks, i.e., \textit{Matching Network}~\cite{vinyals2016matching}, 
\textit{Prototype Network}~\cite{snell2017prototypical}, \textit{Relation Network}~\cite{sung2018learning}, \textit{DN4}~\cite{li2019revisiting}, on the four images datasets mentioned in Sec.~\ref{sec:datasets}. Few shot classification results are shown in Table~\ref{tab:5way_all4metrics}. \textit{BSNet} is implemented by keeping the $h_{\gamma}$ branch of Bi-similarity module to be our cosine module, replacing the $g_{\varphi}$ branch by an arbitrary module among Matching module, Prototype module, Relation module, and Image-to-Class module in Sec.~\ref{sec:Implementation Details}, resulting in \textit{BSNet} (M\&C), \textit{BSNet} (P\&C), \textit{BSNet} (R\&C), and \textit{BSNet} (D\&C) in Table~\ref{tab:5way_all4metrics}, respectively.

Our \textit{BSNet} outperforms \textit{Matching Network} on \textit{Stanford-Dogs}, \textit{FGVC-Aircraft} and \textit{CUB-200-2011}, and achieves comparable generalization performance on \textit{Stanford-Cars} ($63.58\%$ versus $64.74\%$). To compare with \textit{DN4}, we combine its similarity measurement (i.e. the Image-to-Class module in Sec.~\ref{sec:Implementation Details}) with our cosine similarity measurement (i.e. the Cosine module in Sec.~\ref{sec:Implementation Details}), which achieves notable $85.39\%$ on the \textit{CUB-200-2011} dataset. It also demonstrates that the proposed \textit{BSNet} consistently exceeds the \textit{Prototype Network} and \textit{Relation Network} on all of the four fine-grained datasets.

A tougher problem is fine-grained 1-shot classification. We also conducted 5-way 1-shot experiments on the four fine-grained datasets. Table~\ref{tab:5way_all4metrics} demonstrates that the proposed \textit{BSNet} achieves the state-of-the-art classification performance on fine-grained datasets in 1-shot scenarios. That is, the proposed \textit{BSNet} achieves the best classification accuracy on each dataset, $54.12\%$, $51.06\%$, $64.83\%$ and $65.89\%$, respectively. 

It is found that the proposed \textit{BSNet} has great improvements by combining the deep relation measurement (i.e.~the Relation module in Sec.~\ref{sec:Implementation Details}) and our Cosine module, which consistently obtains state-of-the-art performance on all of the four fine-grained datasets in 1-shot scenarios. It is also found that the combination of our Cosine module with other few-shot networks boosts the performance (around $2\%$ to $8\%$ in mean accuracy) with the exception of \textit{Matching Network}.

For the two addends in the loss function of \textit{BSNet} (Equation~\ref{eq:bi_sim_nets_trainingLoss}), we also tried to tune their weights according to  $l_q =  \lambda \times l_q^{(1)} + \beta \times l_q^{(2)}$. Table~\ref{tab:tuneweights} lists the classification performance of \textit{BSNet} (R\&C) on the \textit{CUB-200-2011} dataset when $\lambda$ and $\beta$ are set to different values. Firstly, the highest mean value of the classification accuracies is achieved when the weights of the two addends both equal $1$. Meanwhile, we can only observe tiny difference of the standard deviation between different settings of the weights. Thus, we simply select equal weights for the two addends in the loss function.

\subsection{Few-shot Classification with Different Backbone Networks}\label{sec:Backbones}

In the previous experiments, we used feature embedding modules \textit{Conv4} for the \textit{Matching Network}, \textit{Prototype Network} and \textit{Relation Network}, and \textit{Con64F} for \textit{DN4}. To further investigate the effectiveness of the proposed \textit{BSNet}, We changed the feature embedding module to \textit{Conv4}, \textit{Conv6}, and \textit{Conv8}. We run all the compared methods and \textit{BSNet} on the \textit{Stanford-Cars} and \textit{CUB-200-2011} datasets, respectively. Regarding the structure of \textit{Conv4} (introduced in Section~\ref{sec:Implementation Details}), \textit{Conv6}, and \textit{Conv8}, we used the same setting as~\cite{chen2019a}. The classification results are listed in Table~\ref{tab:backbone}, from which we can make the following observations:

Firstly, when all the methods use \textit{Conv4} as the feature embedding module, for the 5-way 5-shot classification on the \textit{Stanford-Cars} dataset, the proposed \textit{BSNet} is inferior to the \textit{Matching Network}. But in other cases, the proposed method performs better than the \textit{Matching Network}, \textit{Prototype Network}, \textit{Relation Network}, and \textit{DN4} on the \textit{Stanford-Cars} and \textit{CUB-200-2011} datasets in both the 5-way 5-shot classification and the 5-way 1-shot classification scenarios. When all the methods use \textit{Conv6} as the feature embedding module, the proposed \textit{BSNet} performs slightly worse than the \textit{Prototype network} and \textit{DN4} on \textit{CUB-200-2011} dataset, and performs slightly worse than \textit{DN4} on \textit{Stanford-Cars} in the 5-way 5-shot classification. In other cases, \textit{BSNet} performs better than all the compared methods in the 5-way 5-shot classification. Moreover, \textit{BSNet} performs the best in the 5-way 1-shot tasks. Secondly, when all the methods use \textit{Conv8} as the feature embedding module, the proposed \textit{BSNet} performs worse than Prototype Network and \textit{DN4} in 5-way 1-shot and 5-way 5-shot tasks, respectively. However, it performs better than \textit{Matching Network} and \textit{Relation Network} on \textit{Stanford-Cars} and \textit{CUB-200-2011} for both the 5-way 5-shot and 5-way 1-shot tasks.

In addition to traditional convolutional backbone networks, we test feature embedding by \textit{ResNet-10}, \textit{ResNet-18} or \textit{ResNet-34}, for \textit{Relation Network} and \textit{BSNet} on the \textit{Stanford-Cars} and \textit{CUB-200-2011} datasets (see Table~\ref{tab:backbone}). We used the same setting as~\cite{chen2019a}, except that the epoch number of \textit{ResNet-34} is set to 1,800 since this feature embedding network is difficult to converge on the training data, especially for \textit{BSNet}. From Table~\ref{tab:backbone}, we find that, when \textit{Relation Network} and \textit{BSNet} use \textit{ResNet-10} or \textit{ResNet-18} as the feature embedding module, 

\textit{BSNet} generally outperforms \textit{Relation Network}.
When \textit{ResNet-34} is chosen, however, \textit{Relation Network} is better.

In brief, we can conclude that, firstly, when we change the feature embedding to \textit{Conv4}, \textit{Conv6}, \textit{Conv8}, \textit{ResNet-10}, and \textit{ResNet-18}, \textit{BSNet} outperforms the compared methods in most of the cases. Secondly, the performance of all compared methods will increase when the embedding layer becomes deeper, which is consistent with the findings in \cite{chen2019a}.

\begin{figure*}[h]
\centering
\includegraphics[width=6.5in]{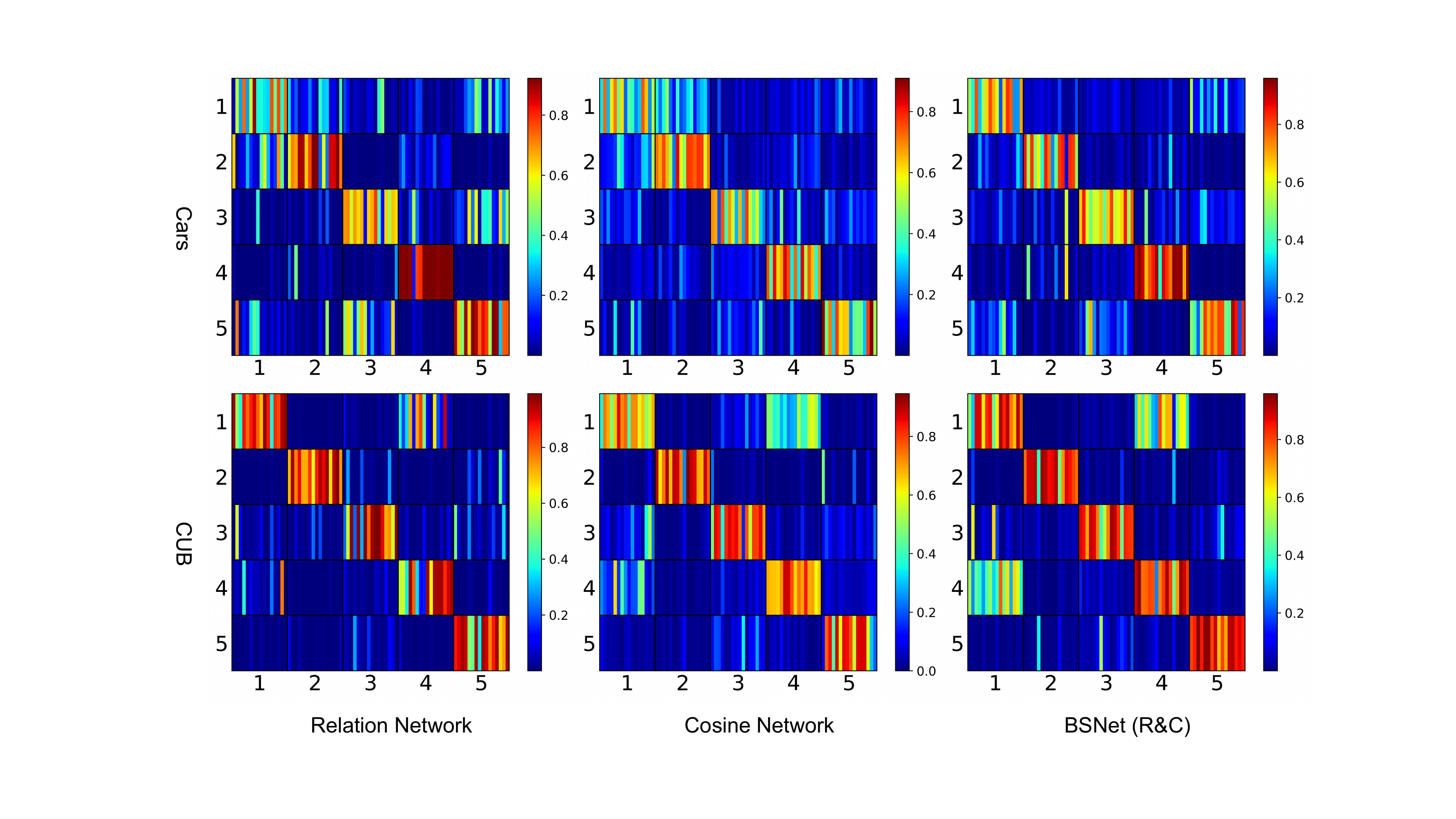}
\caption{Visualization of similarity scores predicted by \textit{Relation Network}~\cite{sung2018learning}, \textit{Cosine Network} and the proposed \textit{BSNet} on the \textit{Stanford-Cars} (Cars) and \textit{CUB-200-2011} (CUB) datasets. In each confusion matrix, the vertical axis shows 5 classes in a task, the horizontal axis shows query samples in the 5 classes, each class contains 16 query samples. Warmer color means larger similarity score.}
\label{fig:scores_heatmap_5_shot}
\end{figure*}

\begin{table}[!t]
\renewcommand
\arraystretch{1.3}
\centering

\caption{Five-way few-shot classification performance on the \textit{Stanford-Cars} (Cars) and \textit{CUB-200-2011} (CUB) datasets. The methods include: \textit{Matching Network}, \textit{Prototype Network}, \textit{Relation Network}, and the proposed \textit{BSNet} but with two other similarity metrics. We report the mean accuracy for each method, along with its $95\%$ confidence interval.}
\begin{tabular}{ccll}
\toprule[1pt]

\multirow{2}{*}{\it{Shot}}   & \multirow{2}{*}{\it{Model}}            & \multicolumn{2}{c}{5-Way Accuracy (\%)}                         \\ \cmidrule(r){3-4} 
                             &                                        & \multicolumn{1}{c}{\it{Cars}} & \multicolumn{1}{c}{\it{CUB}}    \\ \toprule
\multirow{6}{*}{\it{5-shot}} & Matching~\cite{vinyals2016matching}    & 64.74 $\pm$ 0.72              & 74.57 $\pm$ 0.73                \\
                             & Prototype~\cite{snell2017prototypical} & 62.14 $\pm$ 0.76              & 75.06 $\pm$ 0.67                \\ 
                             & Relation~\cite{sung2018learning}       & 68.52 $\pm$ 0.78              & 77.87 $\pm$ 0.64                \\ \cmidrule(r){2-4} 
                             & BSNet (M\&P)                           & \textbf{68.69 $\pm$ 0.70}     & \textbf{78.20 $\pm$ 0.64}       \\ 
                             & BSNet (M\&R)                           & 67.32 $\pm$ 0.71              & 77.95 $\pm$ 0.69                \\ 
                             & BSNet (P\&R)                           & 63.73 $\pm$ 0.74              & 76.83 $\pm$ 0.65                \\ \bottomrule

\multirow{6}{*}{\it{1-shot}} & Matching~\cite{vinyals2016matching}    & 44.73 $\pm$ 0.77              & 60.06 $\pm$ 0.88          \\
                             & Prototype~\cite{snell2017prototypical} & 36.54 $\pm$ 0.74              & 50.67 $\pm$ 0.88          \\ 
                             & Relation~\cite{sung2018learning}       & \textbf{46.04 $\pm$ 0.91}     & \textbf{63.94 $\pm$ 0.92} \\ \cmidrule(r){2-4} 
                             & BSNet (M\&P)                           & 38.44 $\pm$ 0.75              & 52.10 $\pm$ 0.90          \\ 
                             & BSNet (M\&R)                           & 45.92 $\pm$ 0.79              & 62.70 $\pm$ 0.92          \\ 
                             & BSNet (P\&R)                           & 41.20 $\pm$ 0.83              & 52.95 $\pm$ 0.98          \\ \bottomrule
\end{tabular}

\label{tab:anytwometrics}
\end{table}

\begin{table*}[htp!]

\renewcommand
\arraystretch{1.3}
\centering

\caption{Five-way few-shot classification performance on the \textit{Stanford-Cars} and \textit{CUB-200-2011} datasets. The methods include: \textit{Matching Network}, \textit{Prototype Network}, \textit{Relation Network}, \textit{Cosine Network}, and the proposed \textit{BSNet} with three or four similarity metrics. We report the mean accuracy for each method, along with its $95\%$ confidence interval.}
\label{tab:4metrics}
\begin{tabular}{cllll}
\toprule[1pt]
\multirow{2}{*}{\it{Model}}            & \multicolumn{2}{c}{5-Way 5-shot Accuracy (\%)}                                                                                       & \multicolumn{2}{c}{5-Way 1-shot Accuracy (\%)}                        \\  \cmidrule(r){2-3}  \cmidrule(r){4-5} 
                                       & \multicolumn{1}{c}{\it{Stanford-Cars}}& \multicolumn{1}{c}{\it{CUB-200-2011}} & \multicolumn{1}{c}{\it{Stanford-Cars}} & \multicolumn{1}{c}{\it{CUB-200-2011}}\\ \toprule

Matching Network~\cite{vinyals2016matching}    & 64.74 $\pm$ 0.72          & 74.57 $\pm$ 0.73          & 44.73 $\pm$ 0.77          & 60.06 $\pm$ 0.88          \\
Prototype Network~\cite{snell2017prototypical} & 62.14 $\pm$ 0.76          & 75.06 $\pm$ 0.67          & 36.54 $\pm$ 0.74          & 50.67 $\pm$ 0.88          \\ 
Relation Network~\cite{sung2018learning}       & 68.52 $\pm$ 0.78          & 77.87 $\pm$ 0.64          & 46.04 $\pm$ 0.91          & 63.94 $\pm$ 0.92          \\ 
Cosine Network                                 & 69.98 $\pm$ 0.83          & 77.86 $\pm$ 0.68          & 53.84 $\pm$ 0.94          & 65.04 $\pm$ 0.97          \\ \bottomrule

BSNet (M\&P\&R)                        & 71.53 $\pm$ 0.73          & 79.83 $\pm$ 0.63          & 45.56 $\pm$ 0.83          & 60.28 $\pm$ 0.94          \\ 
BSNet (M\&P\&C)                        & 71.50 $\pm$ 0.75          & 79.30 $\pm$ 0.61          & 44.33 $\pm$ 0.83          & 59.18 $\pm$ 0.93          \\ 
BSNet (M\&R\&C)                        & 72.70 $\pm$ 0.71          & 80.84 $\pm$ 0.67          & \textbf{54.82 $\pm$ 0.89} & \textbf{66.13 $\pm$ 0.90} \\ 
BSNet (P\&R\&C)                        & 69.20 $\pm$ 0.76          & 79.26 $\pm$ 0.62          & 47.09 $\pm$ 0.85          & 58.98 $\pm$ 0.96          \\ \bottomrule
BSNet (M\&P\&R\&C)                     & \textbf{72.78 $\pm$ 0.73} & \textbf{80.94 $\pm$ 0.63} & 48.00 $\pm$ 0.87          & 59.50 $\pm$ 0.96          \\ \bottomrule
\end{tabular}

\end{table*}

\begin{figure*}[h]
\centering
\includegraphics[width=6.5in]{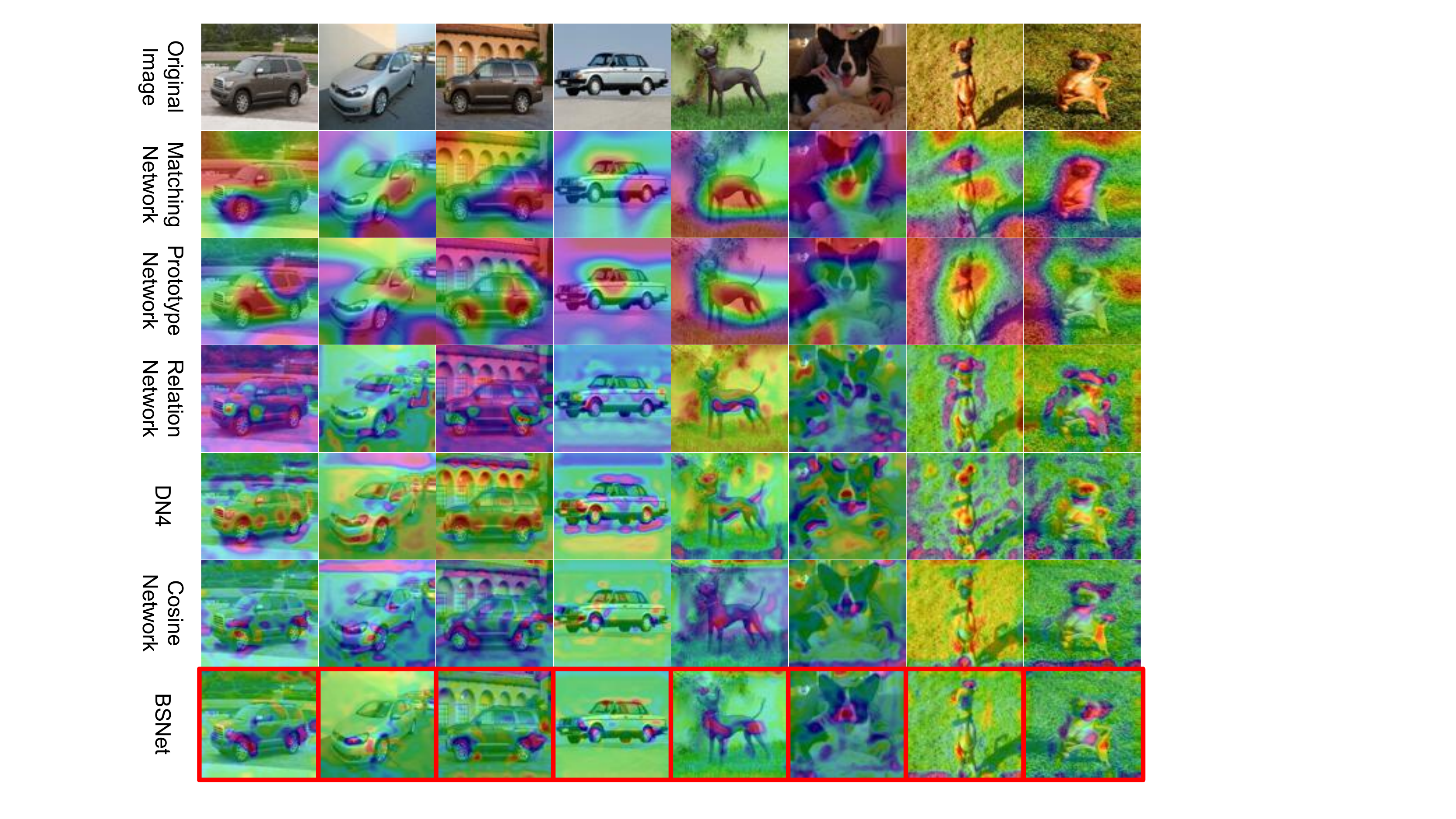}
\caption{Feature visualization under \textit{Matching Network}~\cite{vinyals2016matching}, \textit{Prototype Network}~\cite{snell2017prototypical}, \textit{Relation Network}~\cite{sung2018learning}, \textit{DN4}~\cite{li2019revisiting}, \textit{Cosine Network} and the proposed \textit{BSNet} on the \textit{Stanford-Cars} and \textit{Stanford-Dogs} datasets. The redder the region, the more class-discriminative it is.}
\label{fig:heatmap_5_shot}
\end{figure*}

\subsection{Ablation Study on Effectiveness of Bi-Similarity Module}
\label{sec:on Effectiveness of of Two-branch}

To further explore the effect of Bi-similarity module, in this section, we prune either of two similarity branches in Bi-similarity module. If only keeping Relation similarity branch and pruning the Cosine similarity branch, we recover the \textit{Relation Network}~\cite{sung2018learning}. Similarly, if only keeping our cosine branch, we obtain a single Cosine module based network, which is denoted by \textit{Cosine Network} in this work. 

We compare the performance of \textit{Relation Network}, \textit{Cosine Network} and the proposed \textit{BSNet} with different embedding modules here, i.e., \textit{Conv4} and \textit{Resnet-10}, on the four fine-grained datasets. Experimental results of 5-way 5-shot and 5-way 1-shot tasks are presented in Figure~\ref{fig:ablation}. From Figure~\ref{fig:ablation}, it can be found that firstly, in some cases \textit{Relation Network} performs better than \textit{Cosine Network}, and in other cases \textit{Cosine Network} performs better than \textit{Relation Network}. Secondly, in most cases, the proposed \textit{BSNet} outperforms both of the \textit{Relation Network} and the \textit{Cosine Network}.

We visualize similarity scores obtained by \textit{Relation Network}, \textit{Cosine Network}, and the proposed \textit{BSNet} in the 5-way 5-shot experiments. In particular, we fixed the testing tasks from the meta-testing set of the \textit{Stanford-Cars} and \textit{CUB-200-2011} datasets, respectively, so that the sequences of testing tasks for different methods on each dataset are the same. On the \textit{Stanford-Cars} dataset, the prediction accuracy of \textit{Relation Network}, \textit{Cosine Network} and the proposed \textit{BSNet} are $68.52 \pm 0.78\%$, $69.98 \pm 0.83\%$ and $73.47 \pm 0.75\%$, respectively. On the \textit{CUB-200-2011} dataset, the prediction accuracy of \textit{Relation Network}, \textit{Cosine Network} and the proposed \textit{BSNet} are $77.87 \pm 0.64\%$, $77.86 \pm 0.68\%$ and $80.99 \pm 0.63\%$, respectively.
For each dataset, we randomly select a testing task and show the similarity scores of its query images via confusion matrices. Please refer to Figure \ref{fig:scores_heatmap_5_shot} for details.

From Figure \ref{fig:scores_heatmap_5_shot}, it can be observed that on the \textit{Stanford-Cars} dataset, for the query images from the 4th class, the predicting similarity scores of \textit{Relation Network} are higher than those of \textit{Cosine Network}; on the \textit{CUB-200-2011} dataset, for the query images from the 3rd class, the predicting similarity scores of \textit{Cosine Network} are higher than those of \textit{Relation Network}. However, in both cases, \textit{BSNet} can predict query images correctly by synthesizing the Relation module and the Cosine module. Similar patterns can also be found in other tasks.

These results further show that our bi-similarity idea is reasonable and the proposed \textit{BSNet} is less biased on similarity.

\subsection{Feature Visualization}
\label{sec:Feature Visualization}
To further demonstrate that the features learned by the proposed \textit{BSNet} are distributed in a smaller feature space and are more discriminative, we use a gradient-based technique, \textit{Grad-CAM}~\cite{selvaraju2017gradcam}, to visualize the important regions in the original images, which is illustrated in Figure~\ref{fig:motivation}.

In Figure~\ref{fig:heatmap_5_shot}, we randomly select $8$ images ($4$ from \textit{Stanford-Cars}, $4$ from \textit{Stanford-Dogs}) and resize the original images to the same size as the output of the embedding layer $f_{\phi}$. The resized raw images are compared to the outputs of \textit{Grad-CAM} under the setting of \textit{Matching Network}, \textit{Prototype Network}, \textit{Relation Network}, \textit{DN4 Network}, our \textit{Cosine Network} and the proposed \textit{BSNet}. {Figure~\ref{fig:heatmap_5_shot} shows that in comparison with other compared methods, the proposed \textit{BSNet} consistently has a reduced number of class-discriminative regions concentrated in the regions of  
``cars'' or ``dogs''}, thus the features learned by \textit{BSNet} are more robust and efficient.

\subsection{Effect of Changing Similarity Modules on BSNet}\label{sec:ChangingSimilarityModules}

To further show the applicability of \textit{BSNet}, we implemented \textit{BSNet} based on two similarity metrics among Prototype module, Relation module and Matching module (see Table \ref{tab:anytwometrics}). In addition, we also extended \textit{BSNet} to a network with three or four similarity modules (see Table \ref{tab:4metrics}).

From Table \ref{tab:anytwometrics}, we can observe that, for 5-way 5-shot classification, either \textit{BSNet} is better than single similarity metric networks, e.g., \textit{BSNet(M\&P)} outperforms \textit{Matching Network} and \textit{Prototype Network} on the \textit{Stanford-Cars} dataset, or \textit{BSNet} performs in between the two single similarity metric networks, e.g., \textit{BSNet(M\&R)} underperforms \textit{Matching Network} but outperforms \textit{Relation Network} on the \textit{Stanford-Cars} dataset. For 5-way 1-shot classification, the accuracy of \textit{BSNet} is in between those of the two single similarity metric networks on the two datasets. 

From Table \ref{tab:4metrics}, we can see that, when we extend the proposed \textit{BSNet} to a network with three similarity metrics, it still works well on the \textit{Stanford-Cars} and \textit{CUB-200-2011} datasets. \textit{BSNet} either outperforms all single similarity networks, or performs better than the worst one of three single similarity networks but worse than the best one of three single similarity networks. When we equip \textit{BSNet} with four similarity metrics, similar patterns can be found.

In short, the above results indicate that: firstly, when we replace the similarity branch with different similarity metric modules, the improvement over a single similarity network is consistent with that of the \textit{BSNet} with Relation module and Cosine module. Secondly, when we extend \textit{BSNet} to multiple similarity metrics, the improvement is also consistent with the bi-similarity \textit{BSNet}.

\subsection{Discussion}
\label{sec:Discussion}
The experimental results have demonstrated the effectiveness of \textit{BSNet}. As listed in Table~\ref{tab:5way_all4metrics} and Table~\ref{tab:backbone}, in most cases, the proposed \textit{BSNet} can improve the state-of-the-art metric-based few-shot learning methods. This can be attributed to the following properties of \textit{BSNet}.
Firstly, compared with the single-similarity network, even though the proposed \textit{BSNet} contains more model parameters, it does not necessarily increase the empirical Rademacher complexity according to the Theorem~\ref{theorem}. When a single-similarity based few-shot method has larger model complexity, bi-similarity can reduce the model complexity, and thus resulting in better generalization performance. Therefore, when a single-similarity network is excessively flexible, a \textit{BSNet} with better generalization can be constructed by simply adding an additional similarity module with lower model complexity onto the single-similarity network. Secondly, \textit{BSNet} can learn more discriminative features than other single-similarity networks, as the feature embedding in \textit{BSNet} needs to meet two distinct similarity measures.

The experimental results demonstrate the applicability of \textit{BSNet}. As listed in Table~\ref{tab:anytwometrics} and~\ref{tab:4metrics}, the proposed \textit{BSNet} either performs better than all individual similarity networks, or performs better than the worst one of three individual similarity networks. We can interpret this pattern from the perspective of the Rademacher complexity: the Rademacher complexity of the proposed \textit{BSNet} is no more than the average of the Rademacher complexities of two individual networks. 
Hence, according to the relationship between a model generalization error bound and an empirical Rademacher complexity (Theorem 3.1 in~\cite{mohri2012foundations}), when the Rademacher complexity of the proposed \textit{BSNet} is lower than both two individual networks, the proposed \textit{BSNet} may perform better than the two individual networks; 
otherwise, \textit{BSNet} may perform in between the two individual networks. Thus, in terms of the similarity metric selection for \textit{BSNet}, it is desirable to have each similarity metric module with lower Rademacher complexity.

\section{Conclusion}
\label{sec:Conclusion}
In this paper, we proposed a novel neural network, namely \textit{Bi-Similarity Network}, for few-shot fine-grained image classification. The proposed network contains a shared embedding module and a bi-similarity module, the structure of which contains a parameter sharing mechanism. The {\it Bi-Similarity Network} can learn fewer but more discriminative regions compared with other single metric/similarity based few-shot learning neural networks. Extensive experiments demonstrate that the proposed \textit{BSNet} outperforms or matches previous state-of-the-art performance on fine-grained image datasets.

\bibliographystyle{IEEEtran}
\small
\bibliography{main}

\end{document}